\documentclass{article}

\usepackage[preprint]{styfile}
\usepackage{microtype}
\usepackage{graphicx}
\usepackage{subcaption}
\usepackage{booktabs} 
\usepackage{wrapfig}
\usepackage{float}
\usepackage{hyperref}
\usepackage{comment}
\usepackage{enumitem}
\usepackage{booktabs}
\usepackage{multirow}
\usepackage{makecell}
\usepackage{booktabs}
\usepackage[table]{xcolor}
\usepackage{pifont}
\usepackage{algorithm}
\usepackage{algpseudocode}
\usepackage{amsmath}
\newcommand{\cmark}{\ding{51}}
\newcommand{\xmark}{\ding{55}}

\usepackage[utf8]{inputenc} 
\usepackage[T1]{fontenc}    
\usepackage{hyperref}       
\usepackage{url}            
\usepackage{booktabs}       
\usepackage{amsfonts}       
\usepackage{nicefrac}       
\usepackage{microtype}      
\usepackage{xcolor}         
\usepackage{graphicx}
\usepackage{amsmath}
\usepackage{amssymb}
\usepackage{mathtools}
\usepackage{amsthm}

\usepackage{multirow}
\usepackage{multicol}

\usepackage[capitalize,noabbrev]{cleveref}

\theoremstyle{plain}
\newtheorem{theorem}{Theorem}[section]
\newtheorem{proposition}[theorem]{Proposition}
\newtheorem{lemma}[theorem]{Lemma}

\theoremstyle{definition}

\theoremstyle{remark}

\def \bE {\mathbb{E}}

\def \bN {\mathbb{N}}

\title{Couple to Control: Joint Initial Noise Design in Diffusion Models}

\author{%
  Jing Jia\\
  Department of Computer Science\\
  Rutgers University\\
  Piscataway, NJ 08854 \\
  \texttt{jing.jia@rutgers.edu} \\
  \And 
  Liyue Shen \\
  Department of EECS\\ University of Michigan\\
  Ann Arbor, MI 48109\\
  \texttt{liyues@umich.edu} \\
  \And
  Guanyang Wang\\
  Department of Statistics\\
  Rutgers University\\
  Piscataway, NJ 08854 \\
  \texttt{guanyang.wang@rutgers.edu} \\
}

\begin{document}

\maketitle

\begin{abstract}
Diffusion models typically generate image batches from independent Gaussian initial noises. We argue that this independence assumption is only one choice within a broader class of valid joint noise designs. Instead, one can specify a coupling of the initial noises: each noise remains marginally standard Gaussian, so the pretrained diffusion model receives the same single-sample input distribution, while the dependence across samples is chosen by design. This reframes initial-noise control from selecting or optimizing individual seeds to designing the dependence structure of a multi-sample gallery. This view gives a general framework for initial-noise design, covering several existing methods as special cases and leading naturally to new coupled-noise constructions. Coupled noise can improve generation on its own without adding sampling cost, and it is flexible enough to serve as a structured initialization for optimization-based pipelines when additional computation is available. Empirically, repulsive Gaussian coupling improves gallery diversity on SD1.5, SDXL, and SD3 while largely preserving prompt alignment and image quality. It matches or outperforms recent test-time noise-optimization baselines on several diversity metrics at the same sampling cost as independent generation. Subspace couplings also support fixed-object background generation, producing diverse, natural backgrounds compared with specialized inpainting baselines, with a tunable trade-off in foreground fidelity. 

.
\end{abstract}

\section{Introduction}\label{sec:intro}

Modern diffusion models have achieved remarkable photorealism, but their practical value increasingly depends on two goals: \textit{quality}, producing realistic and prompt-faithful images, and \textit{control}, following user-specified structure across generated outputs.

The quality objective has been extensively studied and engineered. Guidance mechanisms such as classifier guidance \citep{dhariwal2021diffusion}, classifier-free guidance \citep{ho2022classifier}, and CLIP guidance \citep{radford2021learning} provide effective ways to steer samples toward high-quality, semantically aligned outputs. Control is more subtle, however, since it can reflect different user intentions. In a standard text-to-image (T2I) generation task, a user may provide a single prompt and want a diverse set of plausible images; in this
  case, the desired control is over the diversity of the generated gallery. In other settings, such as e-commerce or product editing, the user may instead want to preserve a specific object
  while varying its surrounding context. Here, the desired control is to keep the product faithful to the input image
  while generating natural backgrounds that blend smoothly with the preserved object.

A natural way for controlling the output of a diffusion model is to control its initial noise. Recent studies suggest that initial noise affects generated images in structured and partly predictable ways. \citet{xu2025good} show that different initial noises can systematically affect output quality, layout, and object position. \citet{ban2025the} identify ``trigger patches'' in initial noise that help predict object locations in generated images. \citet{song2025ccs} report a highly linear relationship between perturbations to the initial noise and changes in the output. \citet{jia2026antithetic} show that opposite noise pairs yield significantly negatively correlated outputs. In parallel, a growing body of inference-time
methods optimizes or selects initial noises to improve alignment, aesthetics, diversity, or inverse
problem performance.

These works show that initial noise is not just a source of randomness; it is an effective
test-time control variable. However,  existing methods treat the noises in a generated batch as
separate objects. Each noise may be selected, ranked, or optimized independently. This leaves open a basic question: before any
per-sample optimization is performed, can we design the dependence structure among the initial
noises themselves? 

We answer this question through a coupling framework for initial-noise design. A coupling specifies
a joint distribution over a batch of random variables while preserving their marginals. In
the diffusion setting, this means constructing a batch of noises $(Z_1,\ldots,Z_K)$ such that each
$Z_i$ remains standard Gaussian, but the noises need not be independent. The pretrained diffusion
model therefore receives the same single-sample input distribution as usual, while the user can choose
how samples  are related.

This view turns coupling into a flexible control layer underneath existing initial-noise methods. It
naturally supports three modes of use. 
First, coupled noise can be used \textit{directly} as a  plug-in that replaces independent noise at the start of sampling, with no change to the pretrained model, prompt, sampler, guidance rule, or denoising trajectory. 
Second, coupled noise can serve as an \textit{initialization for direct noise optimization} since our framework allows the same optimizer to start from a structured coupling instead of from independent initialization. 
Third, one may perform \emph{amortized optimization of the coupling}, where the coupling is represented by a matrix in a Gaussian parameterization and optimized under marginal-preserving constraints. Once learned, such a coupling can be reused for new prompts by simply sampling from it, adding no extra denoising or test-time optimization cost.

Our framework naturally generalizes the antithetic sampling method of
\citep{jia2026antithetic}, which corresponds to our specific case for \(K=2\) and \((Z_1, Z_2) = (Z,-Z)\). This broader perspective allows us to scale to arbitrary $K$ and unlocks new usages, yielding (i) it extends to arbitrary $K$, introducing a $K$-sample repulsive coupling that is minimax optimal for worst-pair correlation; (ii) it enables new applications via subspace and learned couplings for structured generative tasks (e.g., fixed-object backgrounds); and (iii) it naturally fits into noise optimization pipelines as an initialization step, which is unaddressed by \cite{jia2026antithetic}.

Our contributions are as follows. First, we introduce coupled initial-noise design as a marginal-preserving
batch-level control layer for diffusion generation. Unlike standard seed selection or noise optimization,
which acts on realized noises, our framework designs the joint distribution from which the noises are
drawn. This makes coupling compatible with three common use cases: zero-cost sampling, structured
initialization for test-time noise optimization, and amortized learning of reusable coupling matrices. Second, we study Gaussian couplings in this framework. We identify useful special cases, including
identical, independent, antithetic, and repulsive couplings; give a minimax characterization of the
repulsive coupling; and develop a matrix parameterization for constructing and learning couplings
under marginal-preserving constraints. Third, we show that this design choice has practical value. On diverse text-to-image generation with
SD1.5, SDXL, and SD3, repulsive coupling matches or exceeds recent test-time optimization baselines
on several diversity metrics with no extra denoising or test-time optimization cost, while largely
preserving prompt alignment and image quality. When combined with contrastive noise optimization,
it often gives further gains. For fixed-object background generation, subspace coupling supports a
different form of structured control, preserving the object while varying the background.

\section{Related Work}
\label{sec:related work}
\textbf{Initial noise:} As the input to a diffusion model, initial noise has been studied both as a lens for understanding the input--output mechanism and as a lever for improving generation. Besides the mechanism studies mentioned in Section \ref{sec:intro}, optimizing initial noise has recently attracted interest for its potential to support inference-time scaling \citep{ma2025scaling}. These techniques have been shown to improve performance across many applications, including text-image alignment, aesthetics, diversity, and inverse problem solving; see \citep{guo2024initno, eyring2024reno, qi2024not, ben2024d, tang2025inferencetime, zhou2025golden, chihaoui2024blind, wang2024dmplug, wan2025fmplug, jia2026weak}. 

\textbf{Coupling} constructs random variables on the same probability space with prescribed marginal distributions, while allowing them to be dependent. It has played an important role in probability and statistics, especially in the study of Markov chains \citep{harris1955chains}. Couplings are among the most fundamental tools for analyzing convergence rates of Markov chains \citep{levin2017markov}. They are also used as algorithmic devices for designing sampling methods \citep{propp1996exact,neal2017circularly}, diagnosing MCMC convergence \citep{johnson1998coupling,biswas2019estimating}, and constructing unbiased estimators \citep{glynn2014exact,jacob2020unbiased,deligiannidis2024importance}. See \citet{lindvall2002lectures} for coupling as a proof technique and \citet{atchade2024unbiased} for coupling-based algorithms.

\section{Coupled Initial Noise}
\label{sec:coupling}

\subsection{Coupling of random variables}
Given probability distributions \(p_1,\dots,p_k\) on the same measurable space \((E,\mathcal E)\), a coupling is a probability distribution \(\gamma\) on  the product space \(E^k\) such that, if \((X_1,\dots,X_k)\sim \gamma\), then each marginal satisfies \(X_i \sim p_i\) for all \(i=1,\dots,k\). In other words, a coupling describes a joint distribution whose marginals are fixed to be $p_1, p_2, \ldots, p_k$. The set of all such couplings is denoted by \(\Gamma(p_1,\dots,p_k)\). As a simple example, one can couple two Bernoulli random variables \(\mathsf{Ber}(0.2)\) and \(\mathsf{Ber}(0.8)\) as follows. First sample \(U \sim \mathrm{Unif}[0,1]\), and then set $
X = \mathbf 1\{U \leq 0.2\},   Y = \mathbf 1\{U \leq 0.8\}.$

Given \(k\) standard Gaussian random variables in \(\mathbb R^d\), a coupling constructs a batch of $k$ random variables $(Z_1,\ldots,Z_k)$
such that each  $Z_i$ remains standard Gaussian,
while allowing dependence across different indices. Figure~\ref{fig:couplings} provides visualizations for the case \(k=2\) and \(d=1\).

  \begin{figure}[htbp!]
    \centering
    \includegraphics[width=\textwidth]{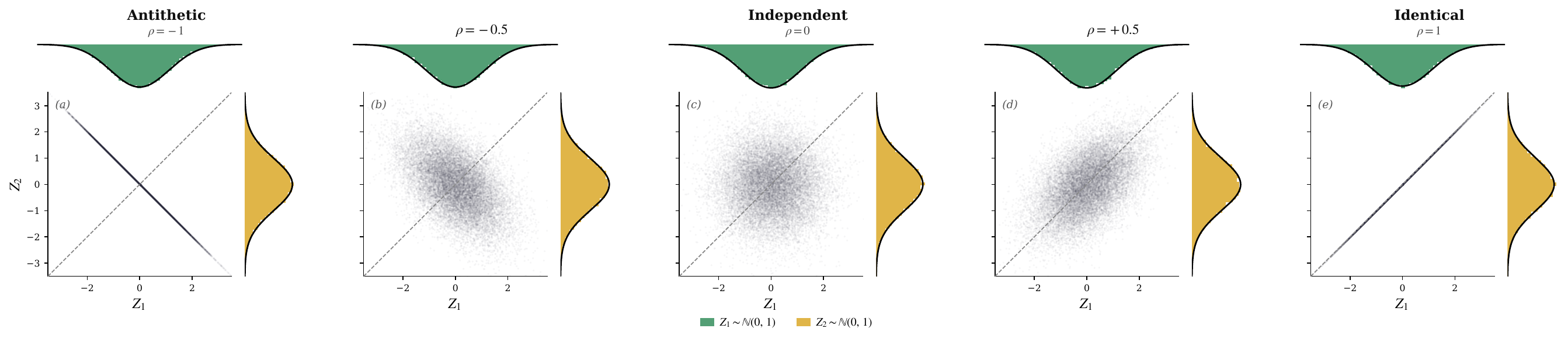}
    \caption{Five couplings of $Z_1, Z_2 \sim \mathbb{N}(0, 1)$, illustrated via
  $20{,}000$
      joint samples. Each panel shows the joint scatter of $(Z_1, Z_2)$ with the
  marginal
      density of $Z_1$ (green, top) and $Z_2$ (amber, right); the dashed line marks
  $Y = X$.
      \textbf{(a) Antithetic} ($\rho = -1$): $Z_2 = -Z_1$, achieving maximal
  negative
      correlation.
      \textbf{(b)} $\rho = -0.5$: an intermediate negatively correlated Gaussian
   coupling.
      \textbf{(c) Independent} ($\rho = 0$): $Z_1$ and $Z_2$ are drawn
  independently.
      \textbf{(d)} $\rho = +0.5$: an intermediate positively correlated
  coupling.
      \textbf{(e) Identical ($\rho = 1$)}: $Z_1 = Z_2$, all mass
  collapses
      onto the diagonal $Z_1 = Z_2$.}
    \label{fig:couplings}
  \end{figure}

\subsection{Coupled Gaussian Noise}

Rather than sampling the initial noises independently, we view the whole batch $(Z_1,\ldots,Z_k)$ as a jointly sampled random vector, while requiring each marginal to remain standard Gaussian. Thus, each sample is generated from the usual initial-noise distribution, but the dependence structure across the batch can be chosen by design. This perspective is especially natural in practical image-generation systems, where a single prompt often produces a small set of candidate images rather than one sample. Since sampling is stochastic, different initial noise vectors can lead to different outputs for the same prompt, allowing users or automatic ranking modules to select a preferred candidate.

We study different couplings of the Gaussian initial noises. A few couplings useful for our paper are:

\begin{itemize}[leftmargin = *]
        \item \textit{Identical coupling}: $Z\sim \bN(0, I_d)$ and $Z_1 = Z_2 = \ldots = Z_k = Z$.

    \item \textit{Independent coupling}: 
    $(Z_1, \ldots, Z_k)$ are independent draws with $Z_i \sim \bN(0, I_d)$.
    This serves as the standard baseline with no dependence across samples.

    \item \textit{Antithetic coupling}:
    For $k = 2$, set $(Z_1, Z_2) = (Z, -Z)$ where $Z \sim \bN(0, I_d)$.
    More generally, one can construct collections with strong negative correlations.
This coupling induces strong negative dependence and is classically used for variance reduction; in diffusion generation, it can also encourage more diverse paired outputs.

    \item \textit{Repulsive coupling}: Algorithm \ref{alg:repulsive-coupling} generates negatively correlated Gaussian samples satisfying $\operatorname{Cov}(z_i, z_j) = -\frac{1}{k-1}\, I_d,$ see Lemma \ref{lem:repulsive-coupling} for a formal statement and proof. It induces the same  correlation between every pair of samples, and reduces to the antithetic coupling when $k = 2$. 
\begin{algorithm}[htbp!]
\caption{Repulsive Coupled Initial Noise Sampling}
\label{alg:repulsive-coupling}
\begin{algorithmic}[1]
\Require Gallery size $k \ge 2$, noise dimension $d$
\Ensure $K$ initial noises that are all marginally standard Gaussian 

\State Sample independent base noises $
u_1,\ldots,u_K \overset{\mathrm{i.i.d.}}{\sim} \bN(0,I_d).$

\State Compute the batch mean $\bar u \gets k^{-1}\sum_{j=1}^k u_j.$

\State Construct repulsively coupled initial noises
\[
z_i \gets \sqrt{\frac{k}{k-1}}\,(u_i-\bar u),
\qquad i=1,\ldots,k .
\]
\State \Return $(z_1,\ldots,z_k)$
\end{algorithmic}
\end{algorithm}

\end{itemize}

The identical, independent, and repulsive couplings all belong to the equicorrelated Gaussian family
\[
    \operatorname{Cov}(z_i,z_j)
    =
    \begin{cases}
    I_d, & i=j,\\
    c I_d, & i\ne j,
    \end{cases}
\]
with $c = 1, 0, -1/(k-1)$ respectively. The following proposition shows equicorrelated coupling exists whenever $c \in [-1/(k-1), 1]$.
\begin{proposition}[Equicorrelated Gaussian coupling and its minimax property]
\label{prop:equicorrelated-coupling}
Let \(k\ge 2\) and \(d\ge 1\). For a scalar \(c\in\mathbb R\), there exists a Gaussian coupling of $K$ standard Gaussian vectors in $\mathbb R^d$ satisfying 
\[
    \operatorname{Cov}(z_i,z_j)
    =
    \begin{cases}
    I_d, & i=j,\\
    c I_d, & i\ne j,
    \end{cases}
\]
if and only if $-1/(k-1)\le c\le 1.$ Moreover, the endpoint \(c=-1/(k-1)\) is the most repulsive possible coupling in
the following minimax sense. Let \(\Gamma_k\) denote the set of all couplings of
\(k\) standard Gaussian vectors in \(\mathbb R^d\). Then
\[
    \inf_{\gamma\in\Gamma_k}
    \max_{1\le i<j\le k}
    \frac{1}{d}\mathbb E_{\gamma}\langle z_i,z_j\rangle
    =
    -\frac{1}{k-1}.
\]
The equicorrelated Gaussian coupling with \(c=-1/(k-1)\) attains this value. 
\end{proposition}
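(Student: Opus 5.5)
The existence claim reduces to positive semidefiniteness. A Gaussian vector $(z_1,\dots,z_k)\in\mathbb R^{kd}$ with the prescribed covariance exists if and only if the block covariance matrix is positive semidefinite. Each marginal is then automatically $\bN(0,I_d)$, so the construction is a coupling. That block matrix is
\[
\Sigma_c=\bigl((1-c)I_k+c\,\mathbf 1\mathbf 1^\top\bigr)\otimes I_d .
\]
Its eigenvalues are those of the $k\times k$ factor, each with multiplicity $d$. The factor has eigenvalue $1+(k-1)c$ on the vector $\mathbf 1$ and eigenvalue $1-c$ on $\mathbf 1^\perp$. Hence $\Sigma_c\succeq 0$ if and only if $1-c\ge 0$ and $1+(k-1)c\ge0$, which is exactly $-1/(k-1)\le c\le 1$.

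For sufficiency we can sample explicitly as $\Sigma_c^{1/2}$ times a standard Gaussian in $\mathbb R^{kd}$. For necessity, any random vector with these second moments has covariance $\Sigma_c$, and covariance matrices are PSD, so no Gaussian (indeed no) construction exists outside the interval. At the endpoint $c=-1/(k-1)$ one can also check directly that Algorithm~\ref{alg:repulsive-coupling} realizes it. Since $u_i-\bar u$ has covariance $(1-1/k)I_d$ and cross-covariance $-\tfrac1k I_d$, rescaling by $k/(k-1)$ gives $I_d$ and $-\tfrac{1}{k-1}I_d$. This is the content of Lemma~\ref{lem:repulsive-coupling}.

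For the minimax statement, take any coupling $\gamma\in\Gamma_k$, not necessarily Gaussian. Each marginal is $\bN(0,I_d)$, so $\mathbb E\|z_i\|^2=d$ and all second moments are finite. Expanding a nonnegative quantity gives
\[
0\le \mathbb E\Bigl\|\sum_{i=1}^k z_i\Bigr\|^2=\sum_i \mathbb E\|z_i\|^2+\sum_{i\ne j}\mathbb E\langle z_i,z_j\rangle = kd+\sum_{i\ne j}\mathbb E\langle z_i,z_j\rangle .
\]
Hence the average of $\frac1d\mathbb E\langle z_i,z_j\rangle$ over the $k(k-1)$ ordered pairs is at least $-1/(k-1)$. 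The maximum over pairs is at least this average, so the infimum is $\ge -1/(k-1)$.

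The equicorrelated coupling with $c=-1/(k-1)$ exists by the first part. It satisfies $\frac1d\mathbb E\langle z_i,z_j\rangle=\frac1d\operatorname{tr}(cI_d)=c$ for every pair, so it attains the bound, and the infimum is a minimum equal to $-1/(k-1)$.

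The only delicate point is that the lower bound must hold over all couplings, including non-Gaussian ones. The variance-of-the-sum argument handles this because it uses only the marginal second moments, so I expect no real obstacle. One thing to verify along the way is that $\operatorname{Cov}(z_i,z_j)=\mathbb E[z_iz_j^\top]$, which holds because the marginals have mean zero.
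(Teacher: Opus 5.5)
Your proposal is correct and follows essentially the same route as the paper. For existence, you show that $R_c\otimes I_d$ is positive semidefinite via the explicit eigenvalues $1-c$ and $1+(k-1)c$. For the minimax bound, you expand $0\le\mathbb E\|\sum_i z_i\|^2$ and use that the maximum over pairs is at least the average, with the $c=-1/(k-1)$ equicorrelated coupling attaining it. Your extra remarks are correct refinements that the paper leaves implicit: the necessity direction rules out any construction with these second moments, not only Gaussian ones, and finite marginal second moments justify the expansion for arbitrary couplings.
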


Proposition~\ref{prop:equicorrelated-coupling} shows that the repulsive
coupling is the most spread-out joint design for \(k\) initial noises in the
minimax pairwise-correlation sense. This property makes it especially useful
for the diverse generation experiments studied later. Indeed, the repulsive coupling minimizes a broad class of feature-similarity objectives called RBF similarity; see Appendix~\ref{subsec:optimality} for the statement and proof.

We also study what coupling can and cannot change, and compare its effects with i.i.d. noise.

\begin{proposition}[What coupling can and cannot change]
\label{prop:marginal-invariance}
Fix a prompt \(p\) and a deterministic generator \(G\). Let
\((Z_1,\ldots,Z_K)\) be any coupling such that \(Z_i\sim \bN(0,I_d)\) for
each \(i\), and define \(X_i=G(p,Z_i)\). Then each \(X_i\) has the same
marginal distribution as the output generated from an independent standard
Gaussian noise \(Z\sim \bN(0,I_d)\). Consequently, for any integrable
single-image score \(q\),
$    \bE\left[K^{-1}\sum_{i=1}^K q(p,X_i)\right]
    =
    \bE_{Z\sim \bN(0,I_d)}\left[q(p,G(p,Z))\right].$
\end{proposition}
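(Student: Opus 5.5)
The plan is a pushforward argument followed by linearity of expectation. No structure of the coupling beyond its marginals is needed.

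\textbf{Step 1 (marginal of each output).} Fix an index \(i\). By the definition of a coupling, the law of \(Z_i\) is \(\bN(0,I_d)\). Since \(G(p,\cdot)\) is a deterministic, measurable map for fixed \(p\), the law of \(X_i=G(p,Z_i)\) is the pushforward of the law of \(Z_i\) under \(z\mapsto G(p,z)\). This pushforward depends only on the law of \(Z_i\), so it equals the law of \(G(p,Z)\) with \(Z\sim\bN(0,I_d)\). Concretely, for any measurable set \(A\) in the output space,
\[
\Pr(X_i\in A)=\Pr\big(Z_i\in G(p,\cdot)^{-1}(A)\big)=\Pr\big(Z\in G(p,\cdot)^{-1}(A)\big)=\Pr\big(G(p,Z)\in A\big).
\]
The joint law of \((Z_1,\ldots,Z_K)\) never enters this computation. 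This is the content of the ``cannot change'' half of the statement.

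\textbf{Step 2 (expected average score).} Let \(q\) be a single-image score, measurable in its second argument and integrable under the common output law. Step 1 gives \(\bE[q(p,X_i)]=\bE_{Z\sim\bN(0,I_d)}[q(p,G(p,Z))]\) for every \(i\). Linearity of expectation then holds regardless of the dependence among the \(X_i\), because each summand is integrable. Hence
\[
\bE\Big[K^{-1}\sum_{i=1}^K q(p,X_i)\Big]=K^{-1}\sum_{i=1}^K \bE[q(p,X_i)],
\]
and each of the \(K\) terms equals the independent-noise value, which gives the claimed identity.

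\textbf{Main obstacle.} There is essentially none; the only care needed is in the hypotheses. We need \(G(p,\cdot)\) and \(q(p,\cdot)\) to be measurable, so that the \(X_i\) are random variables and the pushforward is defined. We need \(q\) to be integrable, so that linearity applies without sign or convergence issues. I would state these as standing assumptions, implicit in ``deterministic generator'' and ``integrable score.''

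I would close with a remark on what coupling \emph{can} change: the joint law of \((X_1,\ldots,X_K)\), and hence any multi-sample functional such as pairwise similarity or diversity, \(\bE\,s(X_i,X_j)\). Such quantities depend on the dependence structure, as Proposition~\ref{prop:equicorrelated-coupling} illustrates at the noise level.
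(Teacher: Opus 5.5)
Your proposal is correct and follows essentially the same route as the paper's proof: equality in law of $Z_i$ and $Z$ passes through the fixed map $G(p,\cdot)$, and linearity of expectation then gives the averaged-score identity. Your explicit measurability and integrability hypotheses only make precise what the paper leaves implicit.
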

Thus, changing the coupling cannot change the expected value of an averaged
single-image metric, such as average image quality or average prompt alignment,
as long as the generator and prompt are fixed. It can, however, change any
metric that depends on two or more samples jointly, such as pairwise distance,
similarity, diversity, or other gallery-level objectives.
To understand the direction of this gallery-level effect, we give a quantitative result in Theorem \ref{thm:coupling-taylor}. Informally,
\[
    \text{effect of coupling}
    \approx
    \sum_{i<j}
    R_{ij}\,
    \mathbb E_{\mathrm{iid}}
    \left[
        \operatorname{tr}\!\left(\nabla^2_{ij} H_p\right)
    \right],
\]
where \(H_p\) is the metric of interest. The formal statement and proof is deferred to the appendix. The proof interpolates between independent and coupled Gaussians along $R_t = (1-t)I_K + tR$ and applies Gaussian integration by parts.

\textbf{Local linear interpretation.}
The effect of repulsive coupling can also be seen from a first-order approximation of the generator. Fix a prompt \(p\), and let \(G_p\) denote the mapping from the initial noise to the generated image. Around a typical noise value, write $G_p(z) \approx a_p + J_p z,$
where \(J_p\) is the local Jacobian. Suppose \(Z_i\) and \(Z_j\) are marginally standard Gaussian and satisfy $
\operatorname{Cov}(Z_i,Z_j)=cI .$
Then
\[
\mathbb{E}\|G_p(Z_i)-G_p(Z_j)\|_2^2
\approx
\mathbb{E}\|J_p(Z_i-Z_j)\|_2^2
=
2(1-c)\|J_p\|_2^2 .
\]
Thus, among couplings with the same Gaussian marginals, making \(c\) more negative increases the expected pairwise separation. Independent sampling corresponds to \(c=0\), while the repulsive equicorrelated coupling uses \(c=-1/(K-1)\), giving $
\mathbb{E}\|G_p(Z_i)-G_p(Z_j)\|_2^2
\approx
2\frac{K}{K-1}\|J_p\|_2^2 .$
This is a factor of \(K/(K-1)\) larger than the corresponding local prediction under independent sampling.

\subsection{Parametrizing Gaussian Couplings}\label{subsec:optimize}
The constructions above are particular choices within a broader design space
of couplings. Rather than specifying the dependence structure by hand, we can parametrize a
large class of valid noise couplings and then optimize over the resulting
parameters. The following theorem gives a matrix parametrization for
Gaussian couplings.

\begin{theorem}[Matrix parametrization of Gaussian noise couplings]
\label{thm:coupling parametrization}
Let \(u_1,\ldots,u_r \overset{\mathrm{i.i.d.}}{\sim} \bN(0,I_d)\), and let
\(A\in \mathbb{R}^{M\times r}\) be a matrix with unit Euclidean norm on each
row, i.e.,   $ \sum_{\ell=1}^r A_{m\ell}^2 = 1$ for each $m$.
Writing \(Z=[z_1,\ldots,z_M]^\top\)
\(U=[u_1,\ldots,u_r]^\top\), and set  \(Z=AU\). Then each
\(z_m\sim \bN(0,I_d)\). Hence \((z_1,\ldots,z_M)\) defines a valid coupling
of \(M\) standard Gaussian noise vectors in \(\mathbb{R}^d\). Moreover, for
every pair \(i,j\),
\[
    \operatorname{Cov}(z_i,z_j)
    =
    \left(\sum_{\ell=1}^r A_{i\ell}A_{j\ell}\right) I_d .
\]

Conversely, suppose \((z_1,\ldots,z_M)\) is a centered jointly Gaussian
coupling such that each \(z_m\sim \bN(0,I_d)\) and $\operatorname{Cov}(z_i,z_j)=\rho_{ij}I_d.$
Let \(R=(\rho_{ij})_{i,j=1}^M\). If \(\operatorname{rank}(R)\le r\), then
there exists a matrix \(A\in\mathbb{R}^{M\times r}\) with unit-norm rows such
that \(R=AA^\top\). With this choice of \(A\), the construction above has the
same joint distribution as \((z_1,\ldots,z_M)\).
\end{theorem}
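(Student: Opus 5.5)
The plan is to prove the forward direction by a direct computation on the coordinates of \(Z=AU\). Fix a coordinate \(s\in\{1,\ldots,d\}\) and write \(U^{(s)}=(u_1^{(s)},\ldots,u_r^{(s)})^\top\in\mathbb R^r\). Since the \(u_\ell\) are i.i.d.\ \(\bN(0,I_d)\), the vectors \(U^{(1)},\ldots,U^{(d)}\) are i.i.d.\ \(\bN(0,I_r)\). The \(s\)-th coordinates of the \(z_m\) form the vector \(Z^{(s)}=AU^{(s)}\), so \(Z^{(s)}\sim\bN(0,AA^\top)\), and these vectors are independent across \(s\). Hence the whole array \((z_1,\ldots,z_M)\) is centered jointly Gaussian with
\[
\mathbb E[z_i^{(s)}z_j^{(t)}]=\mathbf 1\{s=t\}\,(AA^\top)_{ij}.
\]
That is, \(\operatorname{Cov}(z_i,z_j)=(\sum_\ell A_{i\ell}A_{j\ell})I_d\). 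The unit-row-norm condition gives \((AA^\top)_{mm}=1\), so each \(z_m\) is a centered Gaussian in \(\mathbb R^d\) with covariance \(I_d\), i.e.\ \(z_m\sim\bN(0,I_d)\). This is exactly the coupling property.

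For the converse, first note that \(R\) is a covariance matrix: for any fixed coordinate \(s\), the vector \((z_1^{(s)},\ldots,z_M^{(s)})\) has covariance \(R\). So \(R\) is symmetric positive semidefinite with \(\rho_{mm}=1\). Let \(q=\operatorname{rank}(R)\le r\) and take the spectral decomposition \(R=V\Lambda V^\top\) with \(V\in\mathbb R^{M\times q}\) having orthonormal columns and \(\Lambda=\mathrm{diag}(\lambda_1,\ldots,\lambda_q)\), \(\lambda_i>0\). Set \(A=[V\Lambda^{1/2},\,0_{M\times(r-q)}]\in\mathbb R^{M\times r}\). Then \(AA^\top=R\), and the squared norm of row \(m\) is \(R_{mm}=1\), so \(A\) has unit-norm rows.

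The remaining step, equality of joint distributions, is the main point to check carefully. Both \((z_1,\ldots,z_M)\) and \(AU\) are centered jointly Gaussian vectors in \(\mathbb R^{Md}\). A centered Gaussian law is determined by its covariance, so it suffices to compare the \(Md\times Md\) covariances. For the given coupling, the hypothesis \(\operatorname{Cov}(z_i,z_j)=\rho_{ij}I_d\) says this covariance is \(R\otimes I_d\) (up to ordering of indices). For \(AU\), the forward computation gives \(AA^\top\otimes I_d=R\otimes I_d\). The two covariances agree, hence the two joint laws agree.

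The only subtlety is that the hypothesis must be read as joint Gaussianity of the entire \(Md\)-dimensional array, which the statement grants. Without it, matching pairwise covariances alone would not determine the law.
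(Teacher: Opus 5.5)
Your proof is correct and follows the paper's argument. The forward direction is a direct covariance computation for $Z=AU$; your coordinate slicing $Z^{(s)}=AU^{(s)}$ just repackages the paper's computation of $\operatorname{Cov}(z_i,z_j)$. The converse factors $R=BB^\top$ with $\operatorname{rank}(R)$ columns, which you make explicit via the spectral decomposition, pads with zero columns, reads off unit row norms from $R_{mm}=1$, and matches laws because both are centered Gaussians with covariance $R\otimes I_d$.
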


When \(d=1\) and \(r=M\), Theorem~\ref{thm:coupling parametrization} has a
direct geometric interpretation. Every Gaussian coupling of \(M\)
standard normal random variables can be obtained by drawing
\(u\sim \bN(0,I_M)\) and setting \(z=Au\), where \(A\in\mathbb R^{M\times M}\)
has unit Euclidean norm on each row. Thus, choosing such a Gaussian coupling is
equivalent to choosing \(M\) unit vectors. The independent coupling corresponds
to \(A=I_M\), while the repulsive coupling corresponds to $
    A=\sqrt{\frac{M}{M-1}}
    \left(I_M-\frac{1}{M}\mathbf 1\mathbf 1^\top\right).$

An image is often very high-dimensional. For example, a
\(256\times256\times 3\) image has roughly \(200{,}000\) coordinates. In principle, designing a coupling of \(M\)
initial noises in dimension \(d\) amounts to specifying a joint distribution on
\(\mathbb R^{Md}\) whose \(M\) marginals are all \(\bN(0,I_d)\). A full Gaussian
parametrization would therefore require an \(Md\times Md\) matrix,
which is computationally infeasible at typical image dimensions. In practice, we focus on structured families that are much lower-dimensional
than a full \(Md\times Md\)  matrix. We describe two useful
parametrizations below.

\textbf{Same sample-level dependence:} First, one can use a coupling that shares the same sample-level dependence
across all coordinates, i.e.,  $\operatorname{Cov}(z_i,z_j)= c_{ij} I_d.$ This is the class covered by Theorem~\ref{thm:coupling parametrization} with
\(r=M=K\) and \(N=d\). In this case, the coupling is specified by a much smaller
\(K\times K\) sample-level matrix, rather than a full \((Kd)\times(Kd)\) pixel-level matrix.

\textbf{Coupling on a subspace:} We can also couple the noises selectively on a subspace of interest, while using
a simple default coupling on its complement. Let \(V\subset \mathbb R^d\) be a
subspace with orthogonal projection \(P_V\), and let \(P_{V^\perp}=I-P_V\). The
subspace \(V\) may represent a color channel, a low-dimensional image statistic,
a semantic direction, or a region associated with an object. We write
\[
    z_i = P_V z_i + P_{V^\perp} z_i .
\]
On \(V\), the coupling can be chosen to reflect the intended control: identical
coupling for preservation, repulsive coupling for diversity, or a learnable
coupling matrix \(A\) for optimization. On \(V^\perp\), we keep a simple fixed
coupling, for example an equicorrelated coupling or, in particular, the
independent coupling. Since the two components lie in orthogonal subspaces and both have standard
Gaussian marginals on their respective subspaces, each full noise vector still
satisfies $z_i \sim \mathbb N(0, I_d)$. This parametrization lets us optimize the dependence structure only in the
coordinates that are most relevant to the target behavior, while keeping a
simple fixed coupling on the remaining directions.

\subsection{Coupling-Based Noise Design and Optimization}
\label{subsec:optimization}
We next describe three ways to use these initial-noise couplings.
Concrete examples and experiments are provided in Sections~\ref{sec:applications} and~\ref{sec:experiments}.
\begin{itemize}[leftmargin=*]
    \item \textbf{Using coupled noise directly:} A carefully designed coupling can replace independent noises and improve several applications at no additional cost. For example, \cite{jia2026antithetic} shows that antithetic coupling can yield \(10\)--\(100\times\) efficiency gains when estimating downstream statistics. Our experiments also suggest that repulsive coupling alone can roughly match the diversity achieved by optimization-based methods such as \cite{kim2026diverse}, while being much faster.
    \item \textbf{Coupled initialization for direct noise optimization:} A direct way to optimize a gallery-level reward is to optimize the realized
initial noises themselves. Let \(\Gamma\) denote the set of all couplings of
\(K\) standard Gaussian random variables. Given a prompt \(p\), a reward
function \(\mathcal R\), and an initial coupling \(\gamma\in\Gamma\), we first
draw $z^{(0)}_{1:K} \sim \gamma$. One can then solve
\begin{align}
    z_{1:K}^{\star}
    \in
    \arg\max_{z_1,\ldots,z_K}
    \mathcal R_p\big(
        G(p,z_1),\ldots,G(p,z_K)
    \big),
    \label{eq:direct-noise-opt}
\end{align}
possibly with additional regularization on the noises. In this use case, the coupling only specifies the initialization of the
optimization procedure. Existing inference-time noise optimization methods
\citep{guo2024initno,eyring2024reno,qi2024not,kim2026diverse} naturally fit
within this framework by using the independent coupling. The coupling viewpoint then adds an additional design
choice: the same test-time optimizer can instead be initialized from a structured
coupling without changing the rest
of the algorithm.
\item \textbf{Amortized optimization of the coupling matrix:} Instead of directly optimizing the noises, we can seek a coupling
\(\gamma^\star\in\Gamma\) that maximizes the expected distribution-level reward:
\begin{align}
    \gamma^\star
    \in
    \arg\max_{\gamma\in\Gamma}
    \;
    \mathbb E_{p\sim \mathcal P}
    \mathbb E_{z_{1:K}\sim \gamma}
    \left[
        \mathcal R_p\big(
            G(p,z_1),\ldots,G(p,z_K)
        \big)
    \right].
    \label{eq:coupling-level-opt}
\end{align}
Unlike direct noise optimization that operates on \textit{sample level}, this objective does not optimize a particular
realization of the initial noises. Instead, it optimizes the \textit{joint distribution}
from which the initial noises are sampled. Therefore, once \(\gamma^\star\) is
learned, it can be reused for new prompts by simply sampling
\(z_{1:K}\sim \gamma^\star\).
The matrix parametrization in Theorem~\ref{thm:coupling parametrization}
provides a tractable way to implement this idea.  Appendix~\ref{app:subsec:diverse} shows that amortized optimization recovers
the repulsive coupling for gallery diversity. Appendix~\ref{subsec:brightness}
gives a second example, where the coupling matrix is learned for a
brightness-clustered gallery objective. This objective asks for a structured
split into two darker and two brighter samples, rather than generic diversity.
The learned coupling improves the brightness split over both independent and
repulsive couplings, suggesting that amortized coupling learning can capture
simple gallery-level structure beyond diversity.
\end{itemize}

\section{Applications}\label{sec:applications}

For all applications in this section, we treat the generator as a deterministic map. Conditional on the initial noise \(z\) and possibly a prompt $p$, the output is fixed; hence all randomness comes from the initialization. This setting covers deterministic diffusion samplers such as DDIM, probability-flow ODE samplers, and flow-matching models \citep{song2021denoising,lipman2023flow}.

\subsection{Diverse Generation}
\label{subsec:diverse}

An important use case of coupled initial noise is diverse generation. 
Given a prompt \(p\), a generator \(G\), and a desired gallery size \(K\), the goal is to generate $K$ images that are diverse within the batch, while each image still maintains fidelity, visual quality, and prompt alignment. As discussed in Section~\ref{sec:intro}, independently sampled noises can produce redundant outputs \citep{ban2025the}.

Our idea is to use the most spread-out initial-noise batch to encourage diverse outputs. Therefore, we sample $(z_1,\ldots,z_K)$
from the repulsive Gaussian coupling in Algorithm \ref{alg:repulsive-coupling}, so that each \(z_i\sim \bN(0,I_d)\), while for \(i\neq j\), $\operatorname{Cov}(z_i,z_j)
    =
    -\frac{1}{K-1}I_d.$
We then generate the gallery by $x_i = G(p,z_i)$ for each $i$. Since each marginal noise remains standard Gaussian, each image is generated from the usual input distribution of the pretrained diffusion model. The only change is the dependence structure across the batch. Thus, repulsive coupling can increase gallery-level variation without modifying the prompt, model, sampler, guidance rule, or denoising procedure.

This gives a training-free and optimization-free method for diverse generation. It can also be combined with test-time diversity optimization methods by using the repulsive coupling as the initialization instead of independent Gaussian noise. We evaluate both uses in Section~\ref{subsec:diverse T2I}.

\subsection{Background Generation with a Fixed Object}
  Another useful application is controlled background generation while preserving a specified foreground object. This task also illustrates a second use of coupled initial noise: the coupling supplies a structured initialization that can be directly combined with noise optimization, rather than replacing optimization altogether. Given an input image $x^{\star}$, a foreground mask $m\in\{0,1\}^{H\times W}$, and optionally a prompt $p$, the goal
  is to produce a gallery with diverse backgrounds while keeping the masked foreground faithful to $x^{\star}$ (see Figure~\ref{fig:bedroom-object} for illustration). This
  setting involves three desiderata: 1. the foreground object should be preserved, 2. the background should vary across the
  gallery, and 3. the resulting images should remain natural, without visible artifacts at the object-background
  boundary.

Coupling the foreground noise identically does not by
  itself guarantee exact foreground preservation after denoising. We therefore view the task as an inpainting
  problem: the foreground is the known region to be preserved, while the background is the region to be resampled.
  Starting from the coupled initialization $z_i^{(0)}$, we refine only the foreground component by optimizing the initial noise, following the diffusion inverse-problem
  strategy of \citet{wang2024dmplug,jia2026weak}.
  \begin{align}
  \min_{z_1, \ldots, z_k}
  \sum_{i=1}^K
  \left||
  m\odot \big(G(p,z_i)-x^{\star}\big)
  \right||_2^2,
  \qquad
  \text{such that }\quad
  (1-M)\odot z_i=(1-M)\odot z_i^{(0)},
  \label{eqn:fixed-object-opt}
  \end{align}
where \(\odot\) denotes elementwise multiplication.
  The constraint keeps the repulsively coupled background noise fixed, so the optimization only improves foreground
  fidelity.

  This procedure is training-free and can be applied to both unconditional diffusion models and text-conditioned
  models. For latent diffusion models, the image-space mask is downsampled to the latent
  resolution when constructing \(M\), while the fidelity loss is evaluated in image space. 
\section{Experiments}\label{sec:experiments}

\subsection{Diverse Text-to-Image Generation}\label{subsec:diverse T2I}
Our setup follows \cite{kim2025diverse}. We evaluate 2,000 prompts sampled from COCO captions
  using Stable Diffusion v1.5 (SD1.5), Stable Diffusion XL (SDXL), and Stable Diffusion 3 (SD3). For each model and
  prompt, we generate three images under four sampling settings: independent noise, independent noise with
  Contrastive Noise Optimization (CNO) \citep{kim2025diverse}, coupled noise, and coupled noise with CNO. We measure
  diversity using average pairwise L2 distance in pixel space, Mean Pairwise Similarity (MSS), and Vendi score \cite{friedman2023the}, and evaluate prompt
  alignment and perceptual quality using CLIP score \citep{radford2021learning} and PickScore \citep{kirstain2023pick}. 
  Results are reported in
  Table~\ref{tab:diversity_quality_metrics}, and qualitative comparisons are shown in Figure~\ref{fig:diverse_img},\ref{fig:diverse_img_sdxl}. To establish statistical significance, we also include standard deviations and p-values for all diversity metrics.
  Appendix~\ref{app:subsec:diverse} provides details and additional experiments, including ablation studies on both the gallery size and the correlation strength, an amortized optimization experiment, and a compute-cost comparison.
Repulsive coupling has essentially the same wall-clock generation cost as i.i.d. sampling for a $K=3$ gallery, while CNO adds a modest optimization overhead  in our setting (see Table \ref{tab:diverse_compute_cost}) with only three optimization steps.
We summarize the main findings below.

  First, coupled noise improves diversity compared with independent noise. Table~\ref{tab:diversity_quality_metrics} shows that coupled sampling
  improves the diversity metrics across all models. For example, on SD3, coupled noise improves L2 from
  $0.1417$ to $0.1667$, reduces MSS from $0.3007$ to $0.2827$, and improves Vendi from $2.7256$ to $2.7532$. Importantly, these diversity gains do not come at the cost of prompt alignment or perceptual quality. Differences across the L2, MSS, and Vendi metrics are highly statistically significant ($p$-value $< 10^{-7}$ for all metric $\times$ model combinations). 
CLIP score
  and PickScore remain nearly unchanged across independent and coupled sampling. This supports the view that noise coupling mainly changes the joint distribution of a multi-sample gallery, increasing variation across samples while largely preserving the marginal quality and text alignment of each sample.

  Second, coupled noise provides a cost-free alternative to optimization-based methods. Unlike CNO,
  which requires test-time optimization, coupled noise preserves the same sampling cost as independent generation. Despite requiring no extra cost, coupled noise often matches or outperforms i.i.d. + CNO on diversity metrics. For example, on SD1.5,
  coupled noise achieves better L2, MSS, and Vendi than i.i.d. + CNO. This suggests that much of the diversity gain can be obtained by modifying the joint sampling distribution, without adding an expensive optimization loop.

Third, coupled noise is complementary to CNO. Combining coupled noise with CNO often yields the strongest 
  performance. On SDXL, coupled + CNO achieves the best L2, MSS, and Vendi among all methods, with a highly significant L2
  improvement over i.i.d. + CNO ($p< 10^{-290}$); MSS and Vendi also improve, although the improvements are more modest. On SD3, coupled + CNO
  improves L2 from $0.1612$ to $0.1844$, reduces MSS from $0.2804$ to $0.2675$, and improves Vendi from $2.7554$ to
  $2.7749$, with all three changes statistically significant ($p <10^{-16}$).
  As a result, coupled noise can
  provide additional gains on top of CNO with minimal changes to prompt alignment and quality.
  \begin{figure}[htbp!]
    \centering
    \includegraphics[width=\textwidth]{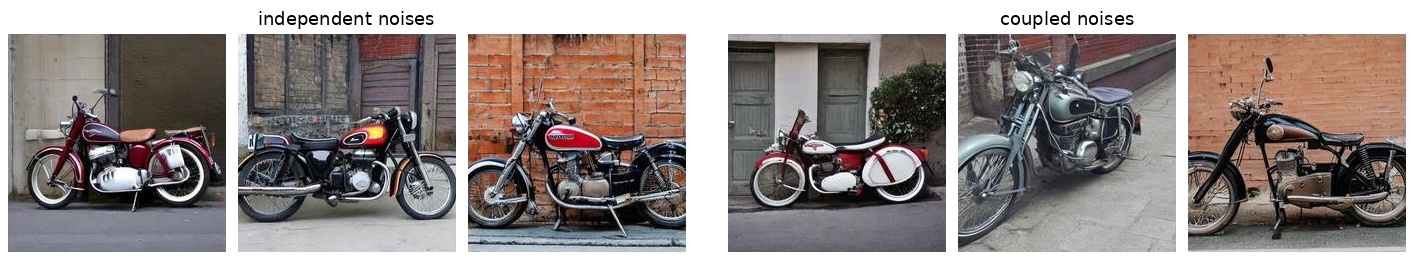}
    \caption{Independent versus coupled noises with the same prompt ``A classic motorcycle in a parking space along the street.'' The coupled-noise  samples  exhibit greater visual
  diversity across pose, viewpoint, object scale, and scene layout while remaining consistent with the prompt.}
    \label{fig:diverse_img}
  \end{figure}

Finally, we examine whether amortized coupling optimization can either improve upon the analytic repulsive coupling or recover it from data. We parameterize a Gaussian coupling using the row-normalized matrix form in Theorem~\ref{thm:coupling parametrization}, and optimize this matrix under several diversity-based objectives. Across random initializations, the learned sample-level correlation matrix consistently approaches the repulsive coupling. This suggests that, for diversity-oriented gallery generation, the minimax repulsive coupling is not only a convenient analytic construction, but also the solution favored by amortized optimization. Experiment details are provided in Appendix \ref{app:subsec:diverse}.

\begin{table}[htbp!]
  \centering
  \caption{Comparison of diversity and quality metrics across different diffusion models. We report standard
  deviations in parentheses for diversity metrics.}
  \label{tab:diversity_quality_metrics}
      \resizebox{0.95\linewidth}{!}{
  \begin{tabular}{llcccccc}
  \toprule
  Model & Method & Opt.
  & \multicolumn{3}{c}{Diversity metrics}
  & \multicolumn{2}{c}{Quality metrics} \\
  \cmidrule(lr){4-6} \cmidrule(lr){7-8}
  & & & L2 $\uparrow$ & MSS $\downarrow$ & Vendi $\uparrow$ & CLIP $\uparrow$ & PickScore $\uparrow$ \\
  \midrule

  \rowcolor{blue!8}
  SD1.5 & i.i.d.         & \xmark & 0.126 (0.04) & 0.150 (0.07) & 2.913 (0.07) & 0.662 & 21.407 \\
  \rowcolor{blue!8}
        & i.i.d.+CNO   & \cmark & 0.136 (0.04) & 0.148 (0.07) & 2.916 (0.07) & 0.661 & 21.402 \\
  \rowcolor{blue!8}
        & coupled        & \xmark & \underline{0.156} (0.04) & \textbf{0.141} (0.07) & \textbf{2.921} (0.07) & 0.662
  & 21.391 \\
  \rowcolor{blue!8}
        & coupled+CNO  & \cmark & \textbf{0.163} (0.05) & \underline{0.144} (0.07) & \underline{2.920} (0.07) &
  0.660 & 21.399 \\
  \midrule

  \rowcolor{green!8}
  SDXL  & i.i.d.         & \xmark & 0.087 (0.02) & 0.208 (0.08) & 2.853 (0.10) & 0.670 & 22.474 \\
  \rowcolor{green!8}
        & i.i.d.+CNO   & \cmark & \underline{0.106} (0.03) & \underline{0.187} (0.07) & \underline{2.879} (0.09) &
  0.668 & 22.419 \\
  \rowcolor{green!8}
        & coupled        & \xmark & 0.105 (0.03) & 0.198 (0.08) & 2.866 (0.09) & 0.669 & 22.459 \\
  \rowcolor{green!8}
        & coupled+CNO  & \cmark & \textbf{0.122} (0.03) & \textbf{0.184} (0.08) & \textbf{2.881} (0.09) & 0.668 &
  22.412 \\
  \midrule

  \rowcolor{orange!10}
  SD3   & i.i.d.         & \xmark & 0.142 (0.05) & 0.301 (0.10) & 2.726 (0.15) & 0.664 & 22.524 \\
  \rowcolor{orange!10}
        & i.i.d.+CNO   & \cmark & 0.161 (0.05) & \underline{0.280} (0.09) & \underline{2.755} (0.15) & 0.664 &
  22.505 \\
  \rowcolor{orange!10}
        & coupled        & \xmark & \underline{0.167} (0.06) & 0.283 (0.09) & 2.753 (0.14) & 0.664 & 22.529 \\
  \rowcolor{orange!10}
        & coupled+CNO  & \cmark & \textbf{0.184} (0.06) & \textbf{0.267} (0.09) & \textbf{2.775} (0.14) & 0.664 &
  22.515 \\
  \bottomrule
  \end{tabular}}
  \end{table}
 
  \subsection{Fixed-Object Background Generation}\label{subsec:object-exp}
  Unlike the diverse-generation experiments, this experiment uses coupling as a structured initialization for a constrained noise-optimization procedure: the background noise remains repulsively coupled to promote variation, while the foreground component is identically coupled and then optimized.
  
We use 100 bedroom images from the LSUN bedroom dataset~\citep{yu2015lsun}, with bed masks produced by YOLOv8
~\citep{Jocher_Ultralytics_YOLO_2023} as foreground objects (see the first row of Figure~\ref{fig:bedroom-object}). For each image, we generate a
gallery of $K$ images and evaluate four aspects: background diversity, measured by pairwise LPIPS
\citep{zhang2018unreasonable} and L2 on the background region; foreground preservation, measured by RMSE on the
masked region; boundary smoothness, measured by Boundary-TV in a narrow band around the mask; and global naturalness,
measured by BRISQUE~\citep{mittal2012no}.

We evaluate fixed-object background generation with Stable Diffusion 1.5, using $K=8$ samples and $10$ optimization
steps for our method. We compare against two latent-space baselines: DDIM inversion of the full image (DDIM inv.)
and DDIM-inverted noise inside the foreground mask with random background noise (Object-inv.). We also compare against
SDEdit~\citep{meng2022sdedit} and Stable Diffusion inpainting~\citep{rombach2022high}. Results are summarized in Table \ref{tab:fix-obj-metric}.

DDIM inv. reconstructs the
input from its inverted noise and therefore has no  background diversity. Object-inv. gives high background
diversity and low BRISQUE, but its foreground error and boundary roughness are much worse, showing that fixing only
the object latent is insufficient. SDEdit preserves the foreground by construction (RMSE $0.01$), but
yields limited background diversity (L2-B $0.02$) and worse BRISQUE ($35.99$). Stable Diffusion inpainting is a strong
task-specific baseline. Compared with it, our method uses no task-specific inpainting model and no additional training,
while reducing Boundary-TV from $0.022$ to $0.012$ and lowering BRISQUE from $34.07$ to $17.52$, with better
background diversity. The cost is a moderate increase in foreground RMSE ($0.09$ versus $0.05$). Appendix~\ref{app:subsec:fixed-object} provides experiment details, additional results on the effect of the number of optimization steps, and results on unconditional diffusion models; in particular, increasing the number of optimization steps further reduces foreground RMSE while preserving the naturalness advantage.
\begin{table}[t]
      \centering
      \caption{Fixed-object background generation with SD 1.5. We report mean (standard deviation).
LPIPS-B and L2-B measure background diversity; RMSE-F measures foreground preservation; Boundary-TV measures
local roughness near the boundary; BRISQUE measures global naturalness.}
\label{tab:fix-obj-metric}
      \resizebox{0.95\linewidth}{!}{
      \begin{tabular}{lccccc}
      \toprule
      Method & LPIPS-B $\uparrow$ & L2-B $\uparrow$ & RMSE-F $\downarrow$ & Boundary-TV $\downarrow$ & BRISQUE $
  \downarrow$ \\
      \midrule
      DDIM inv. & 0 (0) & 0 (0) & 0.09 (0.03) & 0.015 (0.006) & 45.40 (10.95) \\
      Object-inv. + random BG & \textbf{0.47} (0.16) & \textbf{0.05} (0.01) & 0.14 (0.03) & 0.023 (0.011) &
  \underline{18.16} (8.80) \\
      SDEdit & 0.31 (0.12) & 0.02 (0.01) & \textbf{0.01} (0.00) & 0.025 (0.007) & 35.99 (9.67) \\
      SD inpaint & 0.42 (0.14) & \textbf{0.05} (0.01) & \underline{0.05} (0.03) & \underline{0.022} (0.007) & 34.07
  (7.27) \\
      Ours & \underline{0.45} (0.16) & \textbf{0.05} (0.01) & 0.09 (0.03) & \textbf{0.012} (0.005) & \textbf{17.52}
  (9.57) \\
      \bottomrule
      \end{tabular}}
      \vspace{-1em}
  \end{table}
\section{Discussion}\label{sec:discussion}

Our results show that the joint distribution of initial noises is a useful control variable for diffusion generation. Coupled sampling can increase diversity, improve user-specified gallery objectives, and support fixed-object background generation. The method also supports flexible modes of use: it can be applied directly as a sampling plug-in, used to initialize noise optimization, or optimized amortized at the coupling level.
Several limitations and open directions remain. The coupling approach is mainly a batch-level control method, rather than to improve the quality of a single image.  Our learned couplings are currently limited to low-dimensional matrices and relatively simple gallery objectives, so richer semantic control may require more expressive parameterizations. Finally, our experiments focus on image diffusion models; whether coupled initial-noise design is useful for other modalities, such as video, audio, 3D generation, or scientific generative models, remains an open question.

\section*{Acknowledgement}
 Jing Jia and Guanyang Wang acknowledge support from the National Science Foundation through grants
DMS–2210849 and FET-2403007.
Liyue Shen acknowledges funding support by National Science Foundation (NSF) via grant IIS-2435746, Defense Advanced Research Projects Agency (DARPA) under contract No. HR00112520042, as well as the University of Michigan MIDAS PODS Grant Award.

\bibliography{couple_noise}
\bibliographystyle{chicago}
\appendix

\section{Mathematical Details and Proofs}
\label{app:math}

Throughout this appendix, \(K\) denotes the gallery size and \(d\) denotes the
dimension of each initial-noise vector. We write \(I_d\) for the \(d\times d\)
identity matrix and \(\mathbf 1_K\) for the all-ones vector in \(\mathbb R^K\).
All Gaussian random vectors are allowed to be degenerate when the covariance
matrix is positive semidefinite but not positive definite.

\subsection{Repulsive Gaussian coupling}

We first verify the Gaussian coupling used in Algorithm \ref{alg:repulsive-coupling}.

\begin{lemma}[Repulsive Gaussian coupling]
\label{lem:repulsive-coupling}
Let \(U_1,\ldots,U_K \overset{\mathrm{i.i.d.}}{\sim} \bN(0,I_d)\), define
\[
    \bar U := \frac{1}{K}\sum_{j=1}^K U_j,
    \qquad
    Z_i := \sqrt{\frac{K}{K-1}}\,(U_i-\bar U),
    \qquad i=1,\ldots,K .
\]
Then each \(Z_i\sim \bN(0,I_d)\). Moreover, for \(i\ne j\),
\[
    \operatorname{Cov}(Z_i,Z_j)
    =
    -\frac{1}{K-1} I_d .
\]
In addition,
\[
    \sum_{i=1}^K Z_i = 0
    \qquad \text{almost surely}.
\]
\end{lemma}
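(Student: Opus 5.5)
The plan is to exploit that each \(Z_i\) is a fixed linear combination of the jointly Gaussian vectors \(U_1,\ldots,U_K\). Stack \(U=[U_1,\ldots,U_K]^\top\) and write \(Z=AU\) with
\[
A=\sqrt{\tfrac{K}{K-1}}\Bigl(I_K-\tfrac1K\mathbf 1_K\mathbf 1_K^\top\Bigr).
\]
Because \(\operatorname{Cov}(U_\ell,U_m)=\delta_{\ell m}I_d\), the family \((Z_1,\ldots,Z_K)\) is jointly centered Gaussian. Its cross-covariances are \(\operatorname{Cov}(Z_i,Z_j)=(AA^\top)_{ij}I_d\). This is exactly the setting of Theorem~\ref{thm:coupling parametrization}, with \(r=M=K\).

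The key computation is \(AA^\top\). Let \(P=I_K-\tfrac1K\mathbf 1_K\mathbf 1_K^\top\). Then \(P\) is a symmetric idempotent matrix: it is the orthogonal projection onto \(\mathbf 1_K^\perp\). Hence
\[
AA^\top=\tfrac{K}{K-1}P^2=\tfrac{K}{K-1}P .
\]
Reading off the entries:
\begin{itemize}
\item the diagonal entries are \(\tfrac{K}{K-1}\bigl(1-\tfrac1K\bigr)=1\);
\item the off-diagonal entries are \(\tfrac{K}{K-1}\bigl(-\tfrac1K\bigr)=-\tfrac1{K-1}\).
\end{itemize}
The unit diagonal shows that each row of \(A\) has unit Euclidean norm. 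So Theorem~\ref{thm:coupling parametrization} gives \(Z_i\sim\bN(0,I_d)\) directly, with \(\operatorname{Cov}(Z_i,Z_j)=-\tfrac1{K-1}I_d\) for \(i\ne j\).

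If one prefers not to invoke the theorem, the same conclusion follows by hand. Write \(U_i-\bar U=(1-\tfrac1K)U_i-\tfrac1K\sum_{j\ne i}U_j\), a sum of independent centered Gaussians. Its covariance is \(\bigl[(1-\tfrac1K)^2+(K-1)\tfrac1{K^2}\bigr]I_d=\tfrac{K-1}{K}I_d\), and rescaling by \(\sqrt{K/(K-1)}\) gives \(I_d\). Similarly,
\[
\operatorname{Cov}(U_i-\bar U,\,U_j-\bar U)=\Bigl(-\tfrac1K-\tfrac1K+\tfrac1K\Bigr)I_d=-\tfrac1K I_d,
\]
and rescaling yields \(-\tfrac1{K-1}I_d\).

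Finally, the almost-sure identity is deterministic:
\[
\sum_i (U_i-\bar U)=\sum_i U_i-K\bar U=0
\]
pointwise. Equivalently, \(\mathbf 1_K^\top A=0\) because \(P\mathbf 1_K=0\).

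No step is a real obstacle. The only point needing care is that the Gaussianity of each marginal uses joint Gaussianity of the linear image, rather than just matching the covariance. The degenerate case is also fine: \(\operatorname{rank}(P)=K-1\), and degenerate Gaussians are allowed by the appendix's convention.
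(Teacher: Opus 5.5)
Your proof is correct. Your ``by hand'' fallback is essentially the paper's argument. The paper expands $\operatorname{Cov}(U_i-\bar U)$ and $\operatorname{Cov}(U_i-\bar U,U_j-\bar U)$ using $\operatorname{Cov}(U_i,\bar U)=\operatorname{Cov}(\bar U)=\frac1K I_d$ instead of splitting off the coefficient of $U_i$, then rescales by $K/(K-1)$ and checks $\sum_i Z_i=0$ by the same deterministic identity.

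Your primary route is a different packaging. You write $Z=AU$ with $A=\sqrt{K/(K-1)}\,P$, obtain all covariances at once from $P^2=P$, and delegate the marginal law to Theorem~\ref{thm:coupling parametrization}. Invoking that theorem is legitimate, since its forward direction is proved by a self-contained linear computation that never uses this lemma. The route shows the structure behind the lemma: $Z$ is a rescaled orthogonal projection of $U$ onto $\mathbf 1_K^\perp$. That one fact accounts for the unit diagonal, the off-diagonal value $-1/(K-1)$, the rank-$(K-1)$ degeneracy of the joint law, and $\mathbf 1_K^\top A=0$. It also matches the paper's own main-text remark that the repulsive coupling corresponds to this $A$. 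The paper's entrywise computation is more elementary and needs no forward reference, but it leaves that projection structure implicit.
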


\begin{proof}
Since \(Z_i\) is a linear transformation of jointly Gaussian random vectors,
\((Z_1,\ldots,Z_K)\) is jointly Gaussian. Also,
\[
    \mathbb E[Z_i]=0 .
\]
For the marginal covariance, note that
\[
    \operatorname{Cov}(U_i,\bar U)=\frac{1}{K}I_d,
    \qquad
    \operatorname{Cov}(\bar U,\bar U)=\frac{1}{K}I_d .
\]
Therefore,
\begin{align*}
    \operatorname{Cov}(U_i-\bar U)
    &=
    \operatorname{Cov}(U_i)
    -2\operatorname{Cov}(U_i,\bar U)
    +\operatorname{Cov}(\bar U)
    \\
    &=
    I_d - \frac{2}{K}I_d + \frac{1}{K}I_d
    =
    \frac{K-1}{K}I_d .
\end{align*}
Multiplying by the factor \(K/(K-1)\) gives
\[
    \operatorname{Cov}(Z_i)=I_d .
\]
Thus \(Z_i\sim \bN(0,I_d)\).

For \(i\ne j\),
\[
    \operatorname{Cov}(U_i,U_j)=0,
    \qquad
    \operatorname{Cov}(U_i,\bar U)=\operatorname{Cov}(U_j,\bar U)
    =
    \frac{1}{K}I_d .
\]
Hence
\begin{align*}
    \operatorname{Cov}(U_i-\bar U, U_j-\bar U)
    &=
    \operatorname{Cov}(U_i,U_j)
    -\operatorname{Cov}(U_i,\bar U)
    -\operatorname{Cov}(\bar U,U_j)
    +\operatorname{Cov}(\bar U,\bar U)
    \\
    &=
    0-\frac{1}{K}I_d-\frac{1}{K}I_d+\frac{1}{K}I_d
    =
    -\frac{1}{K}I_d .
\end{align*}
After multiplying by \(K/(K-1)\), we obtain
\[
    \operatorname{Cov}(Z_i,Z_j)
    =
    -\frac{1}{K-1}I_d .
\]
Finally,
\[
    \sum_{i=1}^K Z_i
    =
    \sqrt{\frac{K}{K-1}}
    \sum_{i=1}^K (U_i-\bar U)
    =
    \sqrt{\frac{K}{K-1}}
    \left(
    \sum_{i=1}^K U_i
    -
    K\bar U
    \right)
    =
    0 .
\]
This proves the result.
\end{proof}

\subsection{Proof of Proposition~\ref{prop:equicorrelated-coupling}}
\label{app:proof-equicorrelated}

We prove Proposition~\ref{prop:equicorrelated-coupling} from the main paper.

\begin{proof}
First consider the existence statement. If the stated equicorrelated Gaussian
coupling exists, then its block covariance matrix has the form
\[
    \Sigma_c = R_c\otimes I_d,
\]
where
\[
    R_c
    =
    \begin{pmatrix}
    1 & c & \cdots & c\\
    c & 1 & \cdots & c\\
    \vdots & \vdots & \ddots & \vdots\\
    c & c & \cdots & 1
    \end{pmatrix}
    =
    (1-c)I_K + c\,\mathbf 1_K\mathbf 1_K^\top .
\]
A centered Gaussian vector with covariance \(\Sigma_c\) exists if and only if
\(\Sigma_c\) is positive semidefinite. Since
\(\Sigma_c=R_c\otimes I_d\), this is equivalent to \(R_c\) being positive
semidefinite.

The eigenvalues of \(R_c\) are explicit. For any
\(v\in\mathbb R^K\) satisfying \(\mathbf 1_K^\top v=0\),
\[
    R_c v = (1-c)v .
\]
Hence \(1-c\) is an eigenvalue with multiplicity \(K-1\). Also,
\[
    R_c\mathbf 1_K
    =
    \bigl(1+(K-1)c\bigr)\mathbf 1_K .
\]
Thus the remaining eigenvalue is \(1+(K-1)c\). Therefore \(R_c\) is positive
semidefinite if and only if
\[
    1-c\ge 0
    \qquad\text{and}\qquad
    1+(K-1)c\ge 0.
\]
Equivalently,
\[
    -\frac{1}{K-1}\le c\le 1 .
\]
This proves the existence part.

We  prove the minimax property. Let
\(\gamma\in\Gamma_K\) be any coupling of \(K\) standard Gaussian vectors
\(Z_1,\ldots,Z_K\in\mathbb R^d\). Since each marginal is \(\bN(0,I_d)\),
\[
    \mathbb E\|Z_i\|_2^2=d,
    \qquad i=1,\ldots,K .
\]
By nonnegativity of squared norm,
\[
    0
    \le
    \mathbb E\left\|\sum_{i=1}^K Z_i\right\|_2^2 .
\]
Expanding the right-hand side gives
\begin{align*}
    0
    &\le
    \sum_{i=1}^K \mathbb E\|Z_i\|_2^2
    +
    2\sum_{1\le i<j\le K}
    \mathbb E\langle Z_i,Z_j\rangle
    \\
    &=
    Kd
    +
    2\sum_{1\le i<j\le K}
    \mathbb E\langle Z_i,Z_j\rangle .
\end{align*}
Hence
\[
    \frac{2}{K(K-1)}
    \sum_{1\le i<j\le K}
    \frac{1}{d}\mathbb E\langle Z_i,Z_j\rangle
    \ge
    -\frac{1}{K-1}.
\]
The maximum of a finite set is at least its average, so
\[
    \max_{1\le i<j\le K}
    \frac{1}{d}\mathbb E\langle Z_i,Z_j\rangle
    \ge
    -\frac{1}{K-1}.
\]
Since this lower bound holds for every coupling
\(\gamma\in\Gamma_K\),
\[
    \inf_{\gamma\in\Gamma_K}
    \max_{1\le i<j\le K}
    \frac{1}{d}\mathbb E_\gamma\langle Z_i,Z_j\rangle
    \ge
    -\frac{1}{K-1}.
\]

It remains to show that the lower bound is attainable. By the first part of the
proposition, the equicorrelated Gaussian coupling with
\(c=-1/(K-1)\) exists. For this coupling,
\[
    \frac{1}{d}\mathbb E\langle Z_i,Z_j\rangle
    =
    -\frac{1}{K-1},
    \qquad i\ne j .
\]
Therefore,
\[
    \max_{1\le i<j\le K}
    \frac{1}{d}\mathbb E\langle Z_i,Z_j\rangle
    =
    -\frac{1}{K-1}.
\]
This proves the minimax claim and completes the proof.
\end{proof}
\subsection{Optimality of repulsive coupling}\label{subsec:optimality}
\begin{proposition}[Optimality of repulsive coupling for linear-feature diversity]
\label{prop:repulsive-linear-feature-diversity}
Let \(k\ge 2\), \(d\ge 1\), \(m\ge 1\), and let \(J\in\mathbb R^{m\times d}\) be fixed. Let \(\Gamma_k\) denote the set of all couplings of \(k\) standard Gaussian vectors in \(\mathbb R^d\). For \(\gamma\in\Gamma_k\), write
\[
    (z_1,\ldots,z_k)\sim\gamma,
    \qquad
    y_i = Jz_i\in\mathbb R^m .
\]
Define the average squared feature separation
\[
    \mathcal D_J(\gamma)
    :=
    \frac{2}{k(k-1)}
    \sum_{1\le i<j\le k}
    \mathbb E_\gamma \|y_i-y_j\|_2^2 .
\]
Then
\[
    \mathcal D_J(\gamma)
    \le
    \frac{2k}{k-1}\|J\|_F^2
    \qquad
    \text{for every } \gamma\in\Gamma_k .
\]
Moreover, equality holds if and only if
\[
    \sum_{i=1}^k Jz_i = 0
    \qquad
    \gamma\text{-a.s.}
\]
In particular, the repulsive Gaussian coupling
\[
    z_i
    =
    \sqrt{\frac{k}{k-1}}\,(u_i-\bar u),
    \qquad
    \bar u=\frac1k\sum_{j=1}^k u_j,
    \qquad
    u_1,\ldots,u_k\overset{\mathrm{i.i.d.}}{\sim}N(0,I_d),
\]
attains this upper bound.

Now let \(\Gamma_k^{\mathrm G}\subset\Gamma_k\) denote the set of possibly degenerate jointly Gaussian couplings with standard Gaussian marginals. For \(\tau>0\), define the average RBF feature similarity
\[
    \mathcal S_{J,\tau}(\gamma)
    :=
    \frac{2}{k(k-1)}
    \sum_{1\le i<j\le k}
    \mathbb E_\gamma
    \exp\left(
        -\frac{\|y_i-y_j\|_2^2}{2\tau^2}
    \right).
\]
Then, for every \(\gamma\in\Gamma_k^{\mathrm G}\),
\[
    \mathcal S_{J,\tau}(\gamma)
    \ge
    \det\left(
        I_m
        +
        \frac{2k}{(k-1)\tau^2}JJ^\top
    \right)^{-1/2}.
\]
The repulsive Gaussian coupling attains equality. Consequently, among all jointly Gaussian couplings with standard Gaussian marginals, the repulsive Gaussian coupling minimizes the expected average RBF similarity in any fixed linear feature space \(z\mapsto Jz\). The same minimizer also applies to any nonnegative weighted sum, or nonnegative weighted integral, of such RBF similarities over bandwidths \(\tau\).
\end{proposition}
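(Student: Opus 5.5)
Both parts reduce to a single identity for the sum of pairwise squared differences, plus a convexity step for the RBF part.

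\emph{First part.} For any vectors $y_1,\dots,y_k$ we have
\[
\sum_{i<j}\|y_i-y_j\|_2^2=k\sum_i\|y_i\|_2^2-\Bigl\|\sum_i y_i\Bigr\|_2^2 .
\]
Under any coupling $\gamma\in\Gamma_k$, each $z_i\sim\bN(0,I_d)$, so $\mathbb E\|Jz_i\|_2^2=\operatorname{tr}(JJ^\top)=\|J\|_F^2$. This uses only the marginals, so the first term does not depend on $\gamma$. Taking expectations and multiplying by $2/(k(k-1))$ gives
\[
\mathcal D_J(\gamma)=\frac{2k}{k-1}\|J\|_F^2-\frac{2}{k(k-1)}\mathbb E\Bigl\|\sum_i Jz_i\Bigr\|_2^2 .
\]
The bound follows because the subtracted term is nonnegative. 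Equality holds exactly when $\mathbb E\|\sum_i Jz_i\|^2=0$, i.e.\ $\sum_i Jz_i=0$ almost surely. Lemma~\ref{lem:repulsive-coupling} gives $\sum_i z_i=0$ a.s.\ for the repulsive coupling, so it attains the bound.

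\emph{RBF part.} Fix a jointly Gaussian coupling with standard marginals. Its block covariance is $\Sigma$ with diagonal blocks $I_d$ and off-diagonal blocks $C_{ij}=\operatorname{Cov}(z_i,z_j)$; these need not be multiples of $I_d$. Then $y_i-y_j$ is a centered Gaussian in $\mathbb R^m$ with covariance
\[
S_{ij}=J(2I_d-C_{ij}-C_{ij}^\top)J^\top .
\]
The Gaussian moment generating function of a quadratic form gives
\[
\mathbb E\exp\bigl(-\|w\|^2/(2\tau^2)\bigr)=\det(I_m+S_{ij}/\tau^2)^{-1/2}.
\]
Define $f(S)=\det(I_m+S/\tau^2)^{-1/2}=\exp\bigl(-\tfrac12\log\det(I_m+S/\tau^2)\bigr)$. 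Since $\log\det$ is concave on the PSD cone, $-\tfrac12\log\det$ is convex, and $\exp$ is convex and increasing, so $f$ is convex. Jensen's inequality over the $\binom k2$ pairs then gives
\[
\mathcal S_{J,\tau}(\gamma)\ge f(\bar S),\qquad \bar S:=\frac{2}{k(k-1)}\sum_{i<j}S_{ij}.
\]

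Next, bound $\bar S$ by the same identity in matrix form. Since
\[
\sum_{i<j}(2I-C_{ij}-C_{ij}^\top)=k\cdot kI-\operatorname{Cov}\Bigl(\sum_i z_i\Bigr)\preceq k^2 I ,
\]
we get $\sum_{i<j}S_{ij}\preceq k^2JJ^\top$, hence $\bar S\preceq \frac{2k}{k-1}JJ^\top$. Because $f$ is decreasing in the Loewner order (the determinant is monotone on PSD matrices), $f(\bar S)\ge \det\bigl(I_m+\tfrac{2k}{(k-1)\tau^2}JJ^\top\bigr)^{-1/2}$, which is the claimed bound.

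For the repulsive coupling, $C_{ij}=-\frac1{k-1}I_d$, so every $S_{ij}=\frac{2k}{k-1}JJ^\top$. Jensen is then tight because all $S_{ij}$ are equal, and the Loewner bound is tight as well, so equality holds. For a nonnegative weighted sum or integral over $\tau$, the same coupling attains each term's lower bound simultaneously, so it also minimizes the combination.

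The main point needing care is the convexity of $f$, which the argument above settles. The other steps are routine.
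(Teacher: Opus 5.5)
Your proof is correct and follows essentially the same route as the paper: the pairwise-difference identity for the squared separation, then the Gaussian determinant formula, convexity of $S\mapsto\det(I_m+S/\tau^2)^{-1/2}$ with Jensen, the Loewner bound on the averaged difference covariance, and monotonicity in the PSD order. The only cosmetic difference is that you apply the matrix identity to the $d\times d$ cross-covariance blocks $C_{ij}$ and then conjugate by $J$, whereas the paper applies it directly in feature space using $\operatorname{Cov}(y_i)=JJ^\top$.
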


\begin{proof}
We first prove the statement for squared feature separation. For deterministic vectors
\(y_1,\ldots,y_k\in\mathbb R^m\), the following identity holds:
\[
    \sum_{1\le i<j\le k}\|y_i-y_j\|_2^2
    =
    k\sum_{i=1}^k\|y_i\|_2^2
    -
    \left\|\sum_{i=1}^k y_i\right\|_2^2 .
\]
Applying this identity with \(y_i=Jz_i\) and taking expectation under \(\gamma\), we get
\[
    \sum_{1\le i<j\le k}
    \mathbb E_\gamma\|Jz_i-Jz_j\|_2^2
    =
    k\sum_{i=1}^k \mathbb E_\gamma\|Jz_i\|_2^2
    -
    \mathbb E_\gamma
    \left\|
        \sum_{i=1}^k Jz_i
    \right\|_2^2 .
\]
Since each marginal satisfies \(z_i\sim N(0,I_d)\),
\[
    \mathbb E_\gamma\|Jz_i\|_2^2
    =
    \operatorname{tr}(JJ^\top)
    =
    \|J\|_F^2 .
\]
Therefore
\[
    \mathcal D_J(\gamma)
    =
    \frac{2k}{k-1}\|J\|_F^2
    -
    \frac{2}{k(k-1)}
    \mathbb E_\gamma
    \left\|
        \sum_{i=1}^k Jz_i
    \right\|_2^2 .
\]
The second term is nonnegative, so
\[
    \mathcal D_J(\gamma)
    \le
    \frac{2k}{k-1}\|J\|_F^2 .
\]
Equality holds if and only if
\[
    \mathbb E_\gamma
    \left\|
        \sum_{i=1}^k Jz_i
    \right\|_2^2
    =0,
\]
which is equivalent to
\[
    \sum_{i=1}^k Jz_i=0
    \qquad
    \gamma\text{-a.s.}
\]
For the repulsive Gaussian coupling,
\[
    \sum_{i=1}^k z_i
    =
    \sqrt{\frac{k}{k-1}}
    \sum_{i=1}^k (u_i-\bar u)
    =
    0
    \qquad
    \text{a.s.}
\]
Hence \(\sum_i Jz_i=0\) almost surely, and the repulsive Gaussian coupling attains the upper bound.

We now prove the RBF similarity claim. Fix \(\gamma\in\Gamma_k^{\mathrm G}\). For each pair \(i<j\), define
\[
    V_{ij}
    :=
    \operatorname{Cov}(y_i-y_j)
    =
    \operatorname{Cov}(Jz_i-Jz_j).
\]
Since \(\gamma\) is jointly Gaussian, \(y_i-y_j\) is a centered Gaussian vector in \(\mathbb R^m\) with covariance \(V_{ij}\). Hence
\[
    \mathbb E_\gamma
    \exp\left(
        -\frac{\|y_i-y_j\|_2^2}{2\tau^2}
    \right)
    =
    \det\left(I_m+\frac{1}{\tau^2}V_{ij}\right)^{-1/2}.
\]
Define
\[
    g_\tau(V)
    :=
    \det\left(I_m+\frac{1}{\tau^2}V\right)^{-1/2}
\]
for positive semidefinite \(V\in\mathbb R^{m\times m}\). The function \(g_\tau\) is convex on the positive semidefinite cone. Indeed,
\[
    V\mapsto
    -\frac12\log\det\left(I_m+\frac{1}{\tau^2}V\right)
\]
is convex, and the exponential function is convex and nondecreasing. Also, \(g_\tau\) is decreasing in the positive semidefinite order: if \(0\preceq A\preceq B\), then
\[
    \det\left(I_m+\frac{1}{\tau^2}A\right)
    \le
    \det\left(I_m+\frac{1}{\tau^2}B\right),
\]
and therefore \(g_\tau(A)\ge g_\tau(B)\).

Let
\[
    \bar V
    :=
    \frac{2}{k(k-1)}
    \sum_{1\le i<j\le k} V_{ij}.
\]
Using the matrix form of the pairwise-difference identity,
\[
    \sum_{1\le i<j\le k}
    (y_i-y_j)(y_i-y_j)^\top
    =
    k\sum_{i=1}^k y_i y_i^\top
    -
    \left(\sum_{i=1}^k y_i\right)
    \left(\sum_{i=1}^k y_i\right)^\top .
\]
Taking expectations gives
\[
    \sum_{1\le i<j\le k} V_{ij}
    =
    k\sum_{i=1}^k \operatorname{Cov}(y_i)
    -
    \operatorname{Cov}\left(\sum_{i=1}^k y_i\right).
\]
Since \(\operatorname{Cov}(y_i)=JJ^\top\) for every \(i\),
\[
    \bar V
    =
    \frac{2k}{k-1}JJ^\top
    -
    \frac{2}{k(k-1)}
    \operatorname{Cov}\left(\sum_{i=1}^k y_i\right).
\]
Thus
\[
    \bar V
    \preceq
    V_\star,
    \qquad
    V_\star
    :=
    \frac{2k}{k-1}JJ^\top .
\]
By Jensen's inequality and the convexity of \(g_\tau\),
\[
    \mathcal S_{J,\tau}(\gamma)
    =
    \frac{2}{k(k-1)}
    \sum_{1\le i<j\le k} g_\tau(V_{ij})
    \ge
    g_\tau(\bar V).
\]
Since \(g_\tau\) is decreasing in the positive semidefinite order and \(\bar V\preceq V_\star\),
\[
    g_\tau(\bar V)
    \ge
    g_\tau(V_\star).
\]
Combining the two inequalities yields
\[
    \mathcal S_{J,\tau}(\gamma)
    \ge
    g_\tau(V_\star)
    =
    \det\left(
        I_m
        +
        \frac{2k}{(k-1)\tau^2}JJ^\top
    \right)^{-1/2}.
\]

It remains to check that the repulsive Gaussian coupling attains equality. Under this coupling,
\[
    \operatorname{Cov}(z_i,z_j)
    =
    -\frac{1}{k-1}I_d
    \qquad
    \text{for } i\ne j.
\]
Therefore, for every pair \(i<j\),
\[
    \operatorname{Cov}(z_i-z_j)
    =
    I_d+I_d
    -
    \operatorname{Cov}(z_i,z_j)
    -
    \operatorname{Cov}(z_j,z_i)
    =
    \frac{2k}{k-1}I_d.
\]
Hence
\[
    V_{ij}
    =
    \operatorname{Cov}(Jz_i-Jz_j)
    =
    \frac{2k}{k-1}JJ^\top
    =
    V_\star
\]
for every \(i<j\). Thus
\[
    \mathcal S_{J,\tau}(\gamma)
    =
    g_\tau(V_\star)
    =
    \det\left(
        I_m
        +
        \frac{2k}{(k-1)\tau^2}JJ^\top
    \right)^{-1/2}.
\]
So the lower bound is tight, and it is attained by the repulsive Gaussian coupling.

Finally, since the same repulsive Gaussian coupling attains the lower bound for every bandwidth \(\tau>0\), any nonnegative weighted sum or nonnegative weighted integral over such RBF similarities is minimized by the same coupling. This completes the proof.
\end{proof}

\subsection{Proof of Proposition~\ref{prop:marginal-invariance}}
\begin{proof}
Let \(Z\sim \bN(0,I_d)\) be an independent standard Gaussian noise. Since
\(Z_i\sim \bN(0,I_d)\) for each \(i\), we have
\[
    Z_i \stackrel{d}{=} Z .
\]
Because \(G\) is deterministic, the map \(z\mapsto G(p,z)\) is fixed once
the prompt \(p\) is fixed. Hence applying this same map to two random
variables with the same distribution gives random variables with the same
distribution:
\[
    X_i = G(p,Z_i) \stackrel{d}{=} G(p,Z).
\]
Thus each \(X_i\) has the same marginal distribution as the output generated
from an independent standard Gaussian initial noise.

Now let \(q\) be any integrable single-image score. Since
\(X_i \stackrel{d}{=} G(p,Z)\), we have
\[
    \bE\bigl[q(p,X_i)\bigr]
    =
    \bE_{Z\sim \bN(0,I_d)}\bigl[q(p,G(p,Z))\bigr],
    \qquad i=1,\ldots,K .
\]
By linearity of expectation,
\[
\begin{aligned}
    \bE\left[K^{-1}\sum_{i=1}^K q(p,X_i)\right]
    &=
    K^{-1}\sum_{i=1}^K \bE\bigl[q(p,X_i)\bigr] \\
    &=
    K^{-1}\sum_{i=1}^K
    \bE_{Z\sim \bN(0,I_d)}\bigl[q(p,G(p,Z))\bigr] \\
    &=
    \bE_{Z\sim \bN(0,I_d)}\bigl[q(p,G(p,Z))\bigr].
\end{aligned}
\]
This proves the claim.
\end{proof}

\subsection{Proof of Theorem~\ref{thm:coupling parametrization}}
\label{app:proof-matrix-parametrization}

We prove Theorem~\ref{thm:coupling parametrization} from the main paper.

\begin{proof}
We first prove the forward direction. Let
\[
    u_1,\ldots,u_r \overset{\mathrm{i.i.d.}}{\sim} \bN(0,I_d),
\]
and let \(A\in\mathbb R^{M\times r}\) have unit-norm rows:
\[
    \sum_{\ell=1}^r A_{m\ell}^2=1,
    \qquad m=1,\ldots,M .
\]
Define
\[
    z_m=\sum_{\ell=1}^r A_{m\ell}u_\ell,
    \qquad m=1,\ldots,M .
\]
Since each \(z_m\) is a linear combination of independent Gaussian random
vectors, \((z_1,\ldots,z_M)\) is jointly Gaussian. Also,
\[
    \mathbb E[z_m]=0 .
\]
For the marginal covariance,
\begin{align*}
    \operatorname{Cov}(z_m)
    &=
    \operatorname{Cov}
    \left(
    \sum_{\ell=1}^r A_{m\ell}u_\ell
    \right)
    \\
    &=
    \sum_{\ell=1}^r A_{m\ell}^2
    \operatorname{Cov}(u_\ell)
    \\
    &=
    \left(\sum_{\ell=1}^r A_{m\ell}^2\right)I_d
    =
    I_d .
\end{align*}
Hence
\[
    z_m\sim \bN(0,I_d),
    \qquad m=1,\ldots,M .
\]
Thus, the construction gives a valid coupling of \(M\) standard Gaussian noise
vectors.

For the cross-covariance, using the independence of
\(u_1,\ldots,u_r\),
\begin{align*}
    \operatorname{Cov}(z_i,z_j)
    &=
    \operatorname{Cov}
    \left(
    \sum_{\ell=1}^r A_{i\ell}u_\ell,
    \sum_{s=1}^r A_{js}u_s
    \right)
    \\
    &=
    \sum_{\ell=1}^r A_{i\ell}A_{j\ell} I_d
    \\
    &=
    \left(\sum_{\ell=1}^r A_{i\ell}A_{j\ell}\right)I_d .
\end{align*}
Equivalently,
\[
    \operatorname{Cov}(z_i,z_j)
    =
    (AA^\top)_{ij}I_d .
\]

We now prove the converse. Suppose
\((z_1,\ldots,z_M)\) is a centered jointly Gaussian coupling such that
each \(z_m\sim\bN(0,I_d)\) and
\[
    \operatorname{Cov}(z_i,z_j)=\rho_{ij}I_d .
\]
Let \(R=(\rho_{ij})_{i,j=1}^M\). Since \(R\) is the sample-level covariance
matrix of a jointly Gaussian vector, \(R\) is positive semidefinite. Also,
because each marginal covariance is \(I_d\),
\[
    R_{ii}=\rho_{ii}=1,
    \qquad i=1,\ldots,M .
\]

Assume \(\operatorname{rank}(R)\le r\). Let
\(s=\operatorname{rank}(R)\). Since \(R\) is positive semidefinite, there exists
\(B\in\mathbb R^{M\times s}\) such that
\[
    R=BB^\top .
\]

If \(s<r\), pad \(B\) with \(r-s\) zero columns and define
\[
    A=[B\ \ 0]\in\mathbb R^{M\times r}.
\]
If \(s=r\), set \(A=B\). Then
\[
    AA^\top=R .
\]
Moreover, for every row,
\[
    \|A_{i,:}\|_2^2
    =
    (AA^\top)_{ii}
    =
    R_{ii}
    =
    1 .
\]
Thus \(A\) has unit-norm rows.

Now, construct new Gaussian noise vectors by drawing
\[
    u_1,\ldots,u_r \overset{\mathrm{i.i.d.}}{\sim} \bN(0,I_d)
\]
and setting
\[
    \tilde z_m=\sum_{\ell=1}^r A_{m\ell}u_\ell .
\]
By the forward direction,
\[
    \operatorname{Cov}(\tilde z_i,\tilde z_j)
    =
    (AA^\top)_{ij}I_d
    =
    R_{ij}I_d
    =
    \rho_{ij}I_d .
\]
The constructed vector
\((\tilde z_1,\ldots,\tilde z_M)\) is centered Gaussian and has the same block
covariance matrix as \((z_1,\ldots,z_M)\). Since centered Gaussian laws are
determined by their covariance matrices,
\[
    (\tilde z_1,\ldots,\tilde z_M)
    \overset{d}{=}
    (z_1,\ldots,z_M).
\]
This proves the converse and completes the proof.
\end{proof}

\subsection{First-order effect of a coupling}\begin{theorem}[First-order effect of a Gaussian coupling]
\label{thm:coupling-taylor}
Fix a prompt \(p\), a gallery size \(K\), and a smooth gallery-level objective
\[
    H_p:\mathbb R^{Kd}\to \mathbb R,
    \qquad
    H_p(z_1,\ldots,z_K).
\]
Let \(R\in\mathbb R^{K\times K}\) be a positive semidefinite correlation
matrix with \(R_{ii}=1\), and let
\[
    B := R-I_K .
\]
For \(t\in[0,1]\), define
\[
    R_t := I_K+tB = (1-t)I_K+tR,
\]
and let
\[
    (Z_1^{(t)},\ldots,Z_K^{(t)})
    \sim \bN(0,R_t\otimes I_d).
\]
Define
\[
    \Psi(t)
    :=
    \bE\left[
        H_p(Z_1^{(t)},\ldots,Z_K^{(t)})
    \right].
\]
For \(1\le i<j\le K\), write
\[
    \mathcal D_{ij}H_p(z_1,\ldots,z_K)
    :=
    \operatorname{tr}\!\left(\nabla^2_{ij}H_p(z_1,\ldots,z_K)\right)
    =
    \sum_{\ell=1}^d
    \frac{\partial^2 H_p}{\partial z_{i,\ell}\partial z_{j,\ell}} .
\]
Assume the derivatives below are integrable. Then
\[
    \Psi(1)-\Psi(0)
    =
    \int_0^1
    \sum_{1\le i<j\le K}
    B_{ij}\,
    \bE_t\!\left[
        \mathcal D_{ij}H_p(Z_1^{(t)},\ldots,Z_K^{(t)})
    \right]dt ,
\]
where \(\bE_t\) denotes expectation under the Gaussian law with covariance
\(R_t\otimes I_d\). Consequently, the first-order expansion around the
independent coupling is
\[
\begin{aligned}
    \bE_R[H_p(Z_1,\ldots,Z_K)]
    &=
    \bE_{\mathrm{iid}}[H_p(Z_1,\ldots,Z_K)]  \\
    &\quad+
    \sum_{1\le i<j\le K}
    B_{ij}\,
    \bE_{\mathrm{iid}}\!\left[
        \mathcal D_{ij}H_p(Z_1,\ldots,Z_K)
    \right]
    +
    \operatorname{Rem}(B),
\end{aligned}
\]
where
\[
\begin{aligned}
    \operatorname{Rem}(B)
    &=
    \int_0^1 (1-t)
    \sum_{1\le i<j\le K}
    \sum_{1\le k<\ell\le K}
    B_{ij}B_{k\ell}                                      \\
    &\qquad\qquad\qquad\cdot
    \bE_t\!\left[
        \mathcal D_{k\ell}\mathcal D_{ij}
        H_p(Z_1^{(t)},\ldots,Z_K^{(t)})
    \right]dt .
\end{aligned}
\]
In particular, if
\[
    \left|
    \bE_t\!\left[
        \mathcal D_{k\ell}\mathcal D_{ij}H_p
        (Z_1^{(t)},\ldots,Z_K^{(t)})
    \right]
    \right|
    \le M
\]
for all \(t\in[0,1]\) and all index pairs, then
\[
    |\operatorname{Rem}(B)|
    \le
    \frac{M}{2}
    \left(
        \sum_{1\le i<j\le K}|B_{ij}|
    \right)^2 .
\]
\end{theorem}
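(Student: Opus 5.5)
We will prove the identity for \(\Psi'(t)\) by Gaussian interpolation, then integrate it once for the exact formula and twice (Taylor with integral remainder) for the expansion. The key tool is the heat-equation (Price's theorem) form of Gaussian integration by parts. If \(X\sim\bN(0,\Sigma)\) and \(\Sigma=\Sigma(t)\) depends smoothly on \(t\), then for smooth \(f\) with integrable second derivatives,
\[
    \frac{d}{dt}\bE f(X)=\frac12\sum_{a,b}\dot\Sigma_{ab}\,\bE\bigl[\partial_a\partial_b f(X)\bigr].
\]
We will justify this either by differentiating the Gaussian characteristic function, \(\partial_t e^{-\frac12\xi^\top\Sigma\xi}=-\frac12\xi^\top\dot\Sigma\xi\,e^{-\frac12\xi^\top\Sigma\xi}\), and pairing with \(\hat f\), or via the coupling \(X_t=\Sigma(t)^{1/2}G\) combined with Stein's lemma. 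Since \(R_t=(1-t)I_K+tR\) is a convex combination of PSD matrices, it is PSD for every \(t\), so the Gaussian laws are well defined. For the possibly degenerate endpoint \(t=1\), we will run the argument for \(R_t+\varepsilon I\) and let \(\varepsilon\to0\), using dominated convergence under the integrability assumption.

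Here \(\Sigma(t)=R_t\otimes I_d\), so \(\dot\Sigma=B\otimes I_d\). Because \(B_{ii}=R_{ii}-1=0\), only off-diagonal blocks contribute. The entry of \(\dot\Sigma\) linking coordinates \((i,\ell)\) and \((j,\ell')\) is \(B_{ij}\delta_{\ell\ell'}\). Summing and using the symmetry of \(B\) and of the Hessian, the factor \(\tfrac12\) cancels against the two orderings \((i,j),(j,i)\). This gives
\[
    \Psi'(t)=\sum_{i<j}B_{ij}\,\bE_t[\mathcal D_{ij}H_p].
\]
Integrating from \(0\) to \(1\) yields the first displayed identity.

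For the expansion, we apply the same formula to \(\mathcal D_{ij}H_p\) in place of \(H_p\), obtaining
\[
    \Psi''(t)=\sum_{i<j}\sum_{k<\ell}B_{ij}B_{k\ell}\,\bE_t[\mathcal D_{k\ell}\mathcal D_{ij}H_p].
\]
Taylor's theorem with integral remainder, \(\Psi(1)=\Psi(0)+\Psi'(0)+\int_0^1(1-t)\Psi''(t)\,dt\), then gives the stated expansion. This uses \(\Psi(0)=\bE_{\mathrm{iid}}[H_p]\), since \(R_0=I_K\), and \(\bE_0=\bE_{\mathrm{iid}}\). For the bound, we estimate each expectation by \(M\) and pull \(|B_{ij}||B_{k\ell}|\) out of the double sum to get \(M(\sum_{i<j}|B_{ij}|)^2\), then use \(\int_0^1(1-t)\,dt=\tfrac12\).

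The main obstacle is making the differentiation under the expectation rigorous, including at the degenerate endpoint, under only the vague assumption that the derivatives are integrable. We will state explicitly that we assume \(H_p\) is \(C^4\) with derivatives up to fourth order of polynomial growth. This makes all interchanges legitimate by dominated convergence, uniformly in \(t\in[0,1]\), since the covariances \(R_t\) are uniformly bounded. The index bookkeeping for the factor \(\tfrac12\) is the other point needing care, but it is routine.
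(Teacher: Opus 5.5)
Your proposal is correct and follows the paper's proof essentially step for step: the same Gaussian differentiation (heat-equation/Price) identity with $\dot\Sigma = B\otimes I_d$, the same cancellation of the factor $\tfrac12$ via $B_{ii}=0$ and the two orderings $(i,j),(j,i)$, the same second application to $\mathcal D_{ij}H_p$ to get $\Psi''$, and Taylor's formula with integral remainder together with $\int_0^1(1-t)\,dt=\tfrac12$ for the bound. Your additions go beyond the paper, which simply invokes the ``standard'' identity under the stated integrability hypothesis. These are the explicit $C^4$/polynomial-growth assumption to justify the interchanges, and the regularization at the possibly singular endpoint $t=1$. They tighten the same argument rather than change it, at the cost of a slightly stronger hypothesis than the theorem literally states.
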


\begin{proof}
Let
\[
    \Sigma_t := R_t\otimes I_d .
\]
Since \(R_t=(1-t)I_K+tR\), the matrix \(R_t\) is positive semidefinite for all
\(t\in[0,1]\). Hence \(\Sigma_t\) is a valid covariance matrix.

We use the standard Gaussian differentiation identity: if
\(X_t\sim \bN(0,\Sigma_t)\) and \(f\) is smooth enough, then
\[
    \frac{d}{dt}\bE[f(X_t)]
    =
    \frac{1}{2}
    \sum_{a,b}
    \frac{d(\Sigma_t)_{ab}}{dt}
    \bE\left[
        \frac{\partial^2 f(X_t)}{\partial x_a\partial x_b}
    \right].
\]
Apply this identity to
\[
    f(z_1,\ldots,z_K)=H_p(z_1,\ldots,z_K).
\]
For coordinates \(z_{i,r}\) and \(z_{j,s}\),
\[
    \frac{d}{dt}
    \operatorname{Cov}(Z_{i,r}^{(t)},Z_{j,s}^{(t)})
    =
    B_{ij}\mathbf 1\{r=s\}.
\]
The diagonal terms vanish because \(B_{ii}=0\). For every pair \(i<j\), the
off-diagonal covariance appears twice in the full covariance matrix, once as
\((i,j)\) and once as \((j,i)\). These two terms cancel the factor \(1/2\).
Therefore
\[
    \Psi'(t)
    =
    \sum_{1\le i<j\le K}
    B_{ij}
    \bE_t\!\left[
        \sum_{r=1}^d
        \frac{\partial^2 H_p}
        {\partial z_{i,r}\partial z_{j,r}}
        (Z_1^{(t)},\ldots,Z_K^{(t)})
    \right].
\]
That is,
\[
    \Psi'(t)
    =
    \sum_{1\le i<j\le K}
    B_{ij}
    \bE_t\!\left[
        \mathcal D_{ij}H_p(Z_1^{(t)},\ldots,Z_K^{(t)})
    \right].
\]
Integrating from \(t=0\) to \(t=1\) gives the exact interpolation identity.

For the Taylor expansion, write
\[
    \Psi(1)=\Psi(0)+\Psi'(0)+
    \int_0^1(1-t)\Psi''(t)\,dt .
\]
The expression for \(\Psi'(0)\) gives the first-order term under independent
noise. Applying the same Gaussian differentiation identity once more to
\(\mathcal D_{ij}H_p\) gives
\[
    \Psi''(t)
    =
    \sum_{1\le i<j\le K}
    \sum_{1\le k<\ell\le K}
    B_{ij}B_{k\ell}
    \bE_t\!\left[
        \mathcal D_{k\ell}\mathcal D_{ij}
        H_p(Z_1^{(t)},\ldots,Z_K^{(t)})
    \right].
\]
Substituting this into Taylor's formula gives the stated remainder. The bound
follows by taking absolute values and using the assumed uniform bound \(M\).
\end{proof}

\section{Details and additional experiments}
All experiments were conducted on eight NVIDIA L40S GPUs.
Section~\ref{subsub:configs-diverse} provides the experimental configurations.
  Section~\ref{subsub:learned-optimal} describes how amortized coupling optimization is applied in this setting.

\subsection{Diversity generation} \label{app:subsec:diverse}
\subsubsection{Experiments configs}\label{subsub:configs-diverse}
 We evaluate diverse text-to-image generation on Stable Diffusion 1.5 (SD1.5), Stable Diffusion XL
  (SDXL), and Stable Diffusion 3 (SD3). For each prompt, we generate a gallery of $K=3$ images and compare four
  methods: i.i.d. Gaussian initial noise, i.i.d. noise followed by Contrastive Noise Optimization \cite{kim2025diverse} (CNO),
  repulsively coupled Gaussian noise, and repulsively coupled Gaussian noise followed by CNO. The coupled methods
  preserve the standard Gaussian marginal distribution of each initial noise and only change the dependence structure
  within the gallery.

  For CNO, we follow the model-specific hyperparameters of the original CNO paper, summarized in
  Table~\ref{tab:cno_hparams}. Unlike the original CNO evaluation, which uses a batch size $5$ for
  SD1.5 and SD3, and batch size $3$ for SDXL, we use a fixed gallery size $K=3$ for all models to keep the comparison
  with coupled-noise initialization uniform.

  We evaluate diversity using average pairwise L2 distance, MSS, and Vendi score. L2 is computed directly in pixel
  space as the mean pairwise per-pixel squared distance. MSS and Vendi are computed from SSCD image
  features \cite{pizzi2022self}: MSS is the mean off-diagonal cosine similarity within each gallery, while Vendi is computed from the
  eigenvalue entropy of the SSCD feature similarity matrix.

  \begin{table}[t]
      \centering
      \caption{CNO and sampling hyperparameters for diverse generation.}
      \label{tab:cno_hparams}
      \begin{tabular}{lccccc}
      \toprule
      Model & CFG scale & CNO steps & Learning rate & Pooling window & Sampling steps \\
      \midrule
      SD1.5 & 6.0 & 3 & 0.01 & 16 & 50 \\
      SDXL  & 6.0 & 3 & 0.01 & 16 & 50 \\
      SD3   & 7.0 & 3 & 0.001 & 32 & 28 \\
      \bottomrule
      \end{tabular}
  \end{table}

\subsubsection{Compute Cost Comparison}

  We compare the generation cost of the four sampling settings in Table~\ref{tab:diverse_compute_cost}.
  The reported time is the average wall-clock time to generate one $K=3$ gallery for a single prompt. Since repulsive
  coupling only applies a linear transformation to the initial Gaussian noises, its cost is effectively the same as
  i.i.d. sampling. CNO introduces an optimization overhead because it optimizes the noises before the final diffusion sampling pass; in our setting, this overhead is modest since only 3 optimization steps are used.

  \begin{table}[t]
      \centering
      \caption{Compute cost for diverse generation. Time is measured per prompt for one $K=3$ gallery. Relative time
  is normalized by i.i.d. sampling for each model.}
      \label{tab:diverse_compute_cost}
      \begin{tabular}{llcc}
      \toprule
      Model & Method & Time / prompt & Relative time \\
      \midrule
      SD1.5 & i.i.d. & 8.42s & $1.00\times$ \\
            & i.i.d. + CNO & 8.92s & $1.06\times$ \\
            & coupled & 8.42s & $1.00\times$ \\
            & coupled + CNO & 8.92s & $1.06\times$ \\
      \midrule
      SDXL  & i.i.d. & 18.27s & $1.00\times$ \\
            & i.i.d. + CNO & 19.06s & $1.04\times$ \\
            & coupled & 18.27s & $1.00\times$ \\
            & coupled + CNO & 19.06s & $1.04\times$ \\
      \midrule
      SD3   & i.i.d. & 64.73s & $1.00\times$ \\
            & i.i.d. + CNO & 71.88s & $1.11\times$ \\
            & coupled & 64.73s & $1.00\times$ \\
            & coupled + CNO & 71.88s & $1.11\times$ \\
      \bottomrule
      \end{tabular}
  \end{table}

 \subsubsection{Ablation study on the Gallary Size}\label{subsub:ablation-k-diversity}
  We additionally study how the effect of repulsive coupling changes with the gallery size $K$. We run an SD1.5
  ablation on 500 COCO prompts and compare i.i.d. Gaussian noise with repulsively coupled Gaussian noise for
  $K \in \{2,3,4,5\}$. 
  Results are shown in Table~\ref{tab:k_ablation_diversity}. Repulsive coupling improves all three diversity metrics
  for every value of $K$: it increases pixel-space L2, decreases MSS, and increases Vendi. The L2 gain is largest for
  $K=2$ and decreases as $K$ grows, from a $49.7\%$ relative improvement at $K=2$ to a $13.0\%$ improvement at $K=5$.
 Even for larger galleries, the
  coupled
  initialization continues to improve SSCD-based diversity, as reflected by lower MSS and higher Vendi across all
  settings.

  \begin{table}[t]
      \centering
      \caption{Ablation on gallery size for SD1.5 diverse generation over 500 COCO prompts. We compare i.i.d.
  Gaussian
      noise and repulsively coupled Gaussian noise without CNO.}
      \label{tab:k_ablation_diversity}
      \begin{tabular}{llccc}
      \toprule
      $K$ & Method & L2 $\uparrow$ & MSS $\downarrow$ & Vendi $\uparrow$ \\
      \midrule
      2 & i.i.d. & 0.1276 & 0.1550 & 1.9679 \\
        & coupled & \textbf{0.1911} & \textbf{0.1414} & \textbf{1.9721} \\
      \midrule
      3 & i.i.d. & 0.1267 & 0.1504 & 2.9136 \\
        & coupled & \textbf{0.1567} & \textbf{0.1370} & \textbf{2.9255} \\
      \midrule
      4 & i.i.d. & 0.1260 & 0.1502 & 3.8372 \\
        & coupled & \textbf{0.1458} & \textbf{0.1426} & \textbf{3.8519} \\
      \midrule
      5 & i.i.d. & 0.1257 & 0.1497 & 4.7419 \\
        & coupled & \textbf{0.1421} & \textbf{0.1439} & \textbf{4.7592} \\
      \bottomrule
      \end{tabular}
  \end{table}
 \subsubsection{Ablation study on the correlation strength}\label{subsub:ablation-c-diversity}

  We additionally study how the strength of the equicorrelated coupling affects diversity for a fixed gallery size.
  We run an SD1.5 ablation on 100 COCO prompts with gallery size \(K=3\), and vary the off-diagonal correlation
  \(c\) in the equicorrelated Gaussian coupling from independent sampling, \(c=0\), to the maximally repulsive valid
  value, \(c=-1/(K-1)=-1/2\). We evaluate
  \[
  c \in \{0, -0.125, -0.25, -0.375, -0.5\}.
  \]
  Results are shown in Table~\ref{tab:c_ablation_diversity}. Increasing the amount of negative correlation
  consistently
  improves pixel-space L2, from \(0.1283\) at \(c=0\) to \(0.1574\) at \(c=-0.5\). SSCD-based diversity also
  improves:
  MSS decreases and Vendi increases for all negative-correlation settings compared with independent sampling. The
  best
  MSS and Vendi occur at \(c=-0.375\), while the $c = -0.5$ result is extremely close.
  The results show that stronger repulsion generally improves diversity, with the theoretical boundary \(c=-1/(K-1)\)
  performing best or near-best across the reported metrics.

  \begin{table}[t]
      \centering
      \caption{Ablation on equicorrelation strength for SD1.5 diverse generation over 100 COCO prompts with
      gallery size \(K=3\). We vary the off-diagonal correlation \(c\) in the Gaussian coupling.}
      \label{tab:c_ablation_diversity}
      \begin{tabular}{lccc}
      \toprule
      Correlation \(c\) & L2 \(\uparrow\) & MSS \(\downarrow\) & Vendi \(\uparrow\) \\
      \midrule
      \(0\)        & 0.1283 & 0.1542 & 2.9066 \\
      \(-0.125\)  & 0.1353 & 0.1429 & 2.9176 \\
      \(-0.25\)   & 0.1428 & 0.1447 & 2.9204 \\
      \(-0.375\)  & 0.1497 & \textbf{0.1332} & \textbf{2.9300} \\
      \(-0.5\)    & \textbf{0.1574} & 0.1346 & 2.9281 \\
      \bottomrule
      \end{tabular}
  \end{table}
\subsubsection{Learning the optimal coupling}\label{subsub:learned-optimal}
We also experimented with learned couplings optimized directly for image-space diversity.
  For this experiment, we parameterize a batch coupling matrix \(A\in\mathbb{R}^{K\times K}\) following \ref{subsec:optimize}, sample independent
  basis noises \(\epsilon_1,\ldots,\epsilon_K\), and form coupled initial noises
  \[
      z_i=\sum_{j=1}^K A_{ij}\epsilon_j .
  \]
  Each row of \(A\) is constrained to have unit norm, so every \(z_i\) remains marginally standard Gaussian. We
  initialize \(A\) either as the identity coupling, and update it by
  projected gradient steps on the row-normalized sphere.

  The optimization uses Stable Diffusion 1.5 and treats the final generated images as the forward signal. Since
  differentiating through the full denoising trajectory is expensive, we use a skip-gradient estimator: we first run
  the deterministic sampler from the coupled initial noises to obtain final latents \(z_i^{\mathrm{final}}\), then
  use
  \[
      \tilde z_i
      =
      \mathrm{stopgrad}(z_i^{\mathrm{final}})
      +
      z_i
      -
      \mathrm{stopgrad}(z_i)
  \]
  before VAE decoding. We tested gallery sizes \(K=4\) and \(K=5\) using pairwise SSIM and LPIPS as the optimization objective. 
  In all cases, the learned coupling quickly approaches the analytic repulsive structure. For example, for
  \(K=4\), the best learned sample-level correlation matrix using SSIM is
  \[
  C=A A^\top =
  \begin{pmatrix}
  1.000 & -0.339 & -0.332 & -0.333 \\
  -0.339 & 1.000 & -0.335 & -0.335 \\
  -0.332 & -0.335 & 1.000 & -0.326 \\
  -0.333 & -0.335 & -0.326 & 1.000
  \end{pmatrix}.
  \]
  This is very close to the repulsive coupling, whose off-diagonal entries are
  \(-1/(K-1)=-1/3\). This provides an empirical check that, when the learning objective directly rewards image-space
  diversity, amortized optimization recovers essentially the same repulsive dependence structure as the analytic
  construction.

\subsection{Brightness-Clustered Generation}
  \label{subsec:brightness}
Noise coupling can also be learned to induce structured, user-specified relationships among generated samples.
We consider a brightness-clustered generation task in which a four-image gallery is encouraged to
  split into two brighter and two darker samples.
  We follow the amortized optimization framework \eqref{eq:coupling-level-opt} and learn a coupling matrix \(A\) that maps independent Gaussian basis noises \(\epsilon_1,\ldots,
  \epsilon_4\) into coupled initial noises $z_i=\sum_{j=1}^4 A_{ij}\epsilon_j$. Given
  generated images \(x_i=G(p,z_i)\), let \(b_i\) denote the average brightness of image \(i\),
  computed from the value channel. We optimize \(A\) using the objective
  \begin{align}\label{eqn:brightness-objective}
  \left|
  \frac{b_1+b_2}{2}
  -
  \frac{b_3+b_4}{2}
  \right|
  +
  \lambda
  \frac{|b_1-b_2|+|b_3-b_4|}{2},
  \end{align}
  where the first term encourages a bright--dark split between the two image pairs, and the second
  term weakly encourages non-collapse within each pair. After learning \(A\), generation is
  performed without any per-prompt optimization: we sample independent basis noises, apply the
  learned coupling, and run the pretrained diffusion sampler normally.

In our experiment, we choose $\lambda = 0.35$. The coupling matrix \(A\) is learned on 600 COCO prompts using SD1.5 and then applied
  amortized, without per-prompt optimization, to held-out prompts across SD1.5, SDXL, and SD3. We
  compare independent noise, repulsive equicorrelated noise with correlation \(-1/3\), and the
  learned coupling.

  We report three metrics: the 
  pair gap $ \left|(b_1 + b_2)/2 - (b_3+b_4)/2\right|$
   the contrast objective from Eq.~\eqref{eqn:brightness-objective}, and  sorted margin $
  b_{(3)} - b_{(2)},$
  where \(b_{(1)}\le b_{(2)}\le b_{(3)}\le b_{(4)}\) are the sorted brightness values. This metric
  measures the gap between the darker and brighter halves of the gallery.

\begin{table}
    \centering
    \caption{Brightness-clustered generation.}
    \label{tab:brightness_cluster}
    \begin{tabular}{llccc}
    \toprule
    Model & Method & Gap $\uparrow$ & Contrast $\uparrow$ & Margin $\uparrow$ \\
    \midrule
    \rowcolor{blue!8}
    SD1.5 & i.i.d. & 0.0448 & 0.0673 & 0.0336 \\
    \rowcolor{blue!8}
          & equicorr. & 0.0494 & 0.0746 & 0.0384 \\
    \rowcolor{blue!8}
          & learned & \textbf{0.0702} & \textbf{0.0873} & \textbf{0.0462} \\
    \midrule
    \rowcolor{green!8}
    SDXL & i.i.d. & 0.0355 & 0.0524 & 0.0256 \\
    \rowcolor{green!8}
         & equicorr. & 0.0364 & 0.0543 & 0.0271 \\
    \rowcolor{green!8}
         & learned & \textbf{0.0461} & \textbf{0.0604} & \textbf{0.0286} \\
    \midrule
    \rowcolor{orange!10}
    SD3 & i.i.d. & 0.0478 & 0.0715 & 0.0353 \\
    \rowcolor{orange!10}
        & equicorr. & 0.0508 & 0.0761 & 0.0380 \\
    \rowcolor{orange!10}
        & learned & \textbf{0.0652} & \textbf{0.0845} & \textbf{0.0412} \\
    \bottomrule
    \end{tabular}
  \end{table}
The learned brightness coupling produces a clear block structure in the noise correlations. The two samples
  intended to form the bright cluster are positively correlated, and the two samples intended to form the dark
  cluster are also positively correlated, while samples across the two clusters are strongly negatively correlated.
  In one learned solution, the induced correlation matrix $C =AA^\top$ is
  \[
  C =
  \begin{pmatrix}
  1.00 & 0.47 & -0.76 & -0.74 \\
  0.47 & 1.00 & -0.75 & -0.77 \\
  -0.76 & -0.75 & 1.00 & 0.55 \\
  -0.74 & -0.77 & 0.55 & 1.00
  \end{pmatrix}.
  \]
  This structure matches the target relationship: samples (1,2) share noise components and therefore tend to move
  together in brightness, samples (3,4) do the same, and the two groups are coupled in opposite directions to
  encourage a bright--dark split across the gallery.

 Table~\ref{tab:brightness_cluster} shows that the learned coupling gives a cleaner brightness
  split than both independent and repulsive noise across all models. This suggests that coupling
  optimization can learn user-desired gallery structure, rather than only increasing generic
  diversity. Moreover, although the coupling is learned using SD1.5, it generalizes to other model
  families such as SDXL and SD3. Once learned, the coupling can be reused amortized without additional cost. Figure \ref{fig:brightness_cluster_examples} provides visualizations.

\subsection{Background generation with fixed object}\label{app:subsec:fixed-object}
\subsubsection{Experiment Details}\label{subsec:objective exp details}
  We evaluate fixed-object background generation on 100 LSUN bedroom images, using YOLOv8 segmentation masks for the
  bed foreground. Failed detections or generations are excluded, leaving 92 valid images for the Stable Diffusion 1.5 experiment. All metrics are computed with the
  same foreground mask used for generation. Boundary-TV is computed in a 5-pixel dilation-minus-erosion band around
  the mask boundary.

We use \texttt{runwayml/stable-diffusion-v1-5} with prompt ``a bedroom'',
  DDIM inversion/sampling with 20 steps, and classifier-free guidance scale 1. We generate $K=8$ samples per image.
  Our method uses DDIM-inverted foreground noise, repulsively coupled background noise, Adam optimization with
  learning rate $0.01$, and checkpointed optimization steps at 10, 20, 30, 50, and 100 iterations. The optimization
  uses the same masked foreground MSE objective and resets the background latent after each Adam step. SDEdit is
  implemented with a DDIM scheduler, 20 denoising steps, guidance scale 1, strengths $\{0.2,0.4,0.6,0.8\}$, and
  foreground pasting before evaluation. In our experiments, strengths of 0.2, 0.4, and 0.6 produced very low diversity, so the table reports results at strength 0.8. Stable Diffusion inpainting uses 20 steps,
  guidance scale 1, strength 1, and the inverted foreground mask as the editable region.

\subsubsection{Effect of optimization steps}

Table~\ref{tab:fixed_object_steps} reports how the number of optimization steps affects each evaluation axis for SD1.5 fixed-object background generation. The results show a controllable trade-off between background diversity and foreground fidelity, governed by a single hyperparameter.

As the number of optimization steps increases from 10 to 100, foreground preservation improves monotonically: RMSE-F decreases from $0.086$ to $0.048$, indicating that additional optimization progressively pulls the masked region closer to the input object. At the same time, background diversity decreases: LPIPS-B drops from $0.453$ to $0.373$ and L2-B from $0.045$ to $0.029$. This is consistent with the structure of the objective in \eqref{eqn:fixed-object-opt}, which only constrains the foreground; longer optimization tightens this constraint and indirectly reduces variation in the surrounding pixels through the coupling between foreground and background in the denoising trajectory.

Boundary-TV remains essentially flat across all settings $0.012$--$0.013$, confirming that the smooth object--background transition is a property of the coupling-based initialization rather than of the optimization budget. Global naturalness, measured by BRISQUE, degrades mildly with more steps (from $17.52$ to $23.02$). 

Across all optimization budgets, our method achieves substantially lower BRISQUE and Boundary-TV than both SDEdit and SD inpaint, indicating that the coupled-noise approach consistently produces more natural images that blend smoothly with the preserved object. With sufficient optimization steps, foreground fidelity also surpasses that of SD inpaint. SDEdit retains the strongest foreground preservation by construction, but it is dominated on every other axis we measure (background diversity, boundary smoothness, and global naturalness).

\begin{table}[htbp!]
      \centering
      \caption{Effect of optimization steps for SD1.5 fixed-object background generation. We report mean (standard
  deviation). LPIPS-B and L2-B measure background diversity; RMSE-F measures foreground preservation; Boundary-TV
  measures local roughness near the object boundary; BRISQUE measures global image naturalness.}
      \label{tab:fixed_object_steps}
      \resizebox{0.95\linewidth}{!}{
      \begin{tabular}{lccccc}
      \toprule
      Steps & LPIPS-B $\uparrow$ & L2-B $\uparrow$ & RMSE-F $\downarrow$ & Boundary-TV $\downarrow$ & BRISQUE $
  \downarrow$ \\
      \midrule
      10  & 0.453 (0.160) & 0.045 (0.013) & 0.086 (0.025) & 0.012 (0.004) & 17.52 (9.57) \\
      20  & 0.433 (0.166) & 0.040 (0.014) & 0.072 (0.021) & 0.012 (0.004) & 18.12 (10.84) \\
      30  & 0.418 (0.171) & 0.037 (0.014) & 0.065 (0.019) & 0.012 (0.004) & 19.11 (11.05) \\
      50  & 0.399 (0.174) & 0.033 (0.015) & 0.057 (0.016) & 0.012 (0.004) & 20.24 (10.63) \\
      100 & 0.373 (0.177) & 0.029 (0.016) & 0.048 (0.014) & 0.013 (0.004) & 23.02 (11.15) \\
      \bottomrule
      \end{tabular}}
  \end{table}
\subsubsection{Unconditional diffusion models}
We evaluate the unconditional fixed-object task on 100 LSUN bedroom images with YOLOv8 bed masks as foreground
  objects. We use the pretrained pixel-space diffusion model google/ddpm-bedroom-256 and generate a gallery of $K = 4$
  images for each input. Our method uses DDIM inversion to initialize the foreground noise, repulsively coupled
  background noise for diversity, and 100 optimization steps to improve foreground fidelity while keeping the
  background noise fixed. We compare against DDIM inversion, Object-inv. + random BG, and SDEdit. DDIM inversion
  reconstructs from the inverted noise and therefore has no meaningful background diversity. Object-inv. + random BG
  uses the inverted foreground noise with random background noise. 
\begin{table}[t]
      \centering
      \caption{Unconditional fixed-object background generation. We report mean (standard deviation). LPIPS-B and L2-
  B measure background diversity; RMSE-F measures foreground preservation; Boundary-TV measures local roughness near
  the object boundary; BRISQUE measures global image naturalness.}
      \label{tab:uncond_fixed_object}
      \resizebox{0.9\linewidth}{!}{
      \begin{tabular}{lccccc}
      \toprule
      Method & LPIPS-B $\uparrow$ & L2-B $\uparrow$ & RMSE-F $\downarrow$ & Boundary-TV $\downarrow$ & BRISQUE $
  \downarrow$ \\
      \midrule
      DDIM inv. & -- & -- & \textbf{0.20} (0.06) & \textbf{0.015} (0.006) & 32.98 (7.82) \\
      Object-inv. + random BG & 0.39 (0.16) & \underline{0.22} (0.09) & 0.32 (0.08) & \underline{0.017} (0.011) &
  38.37 (11.00) \\
      SDEdit & \underline{0.39} (0.12) & 0.14 (0.04) & \underline{0.03} (0.01) & 0.041 (0.010) & \textbf{24.45}
  (8.35) \\
      Ours & \textbf{0.40} (0.15) & \textbf{0.25} (0.10) & 0.05 (0.02) & 0.039 (0.014) &
  \underline{31.46} (11.91) \\
      \bottomrule
      \end{tabular}}
  \end{table}

Object-inv. + random BG produces diverse backgrounds, but has poor
  foreground preservation, with RMSE-F 0.32. This shows that fixing only the object noise is insufficient after
  denoising. SDEdit-80 achieves the best foreground preservation and BRISQUE, with RMSE-F 0.03 and BRISQUE 24.45, but
  has lower background diversity than ours, especially in L2-B, 0.14 versus 0.25. Our method achieves the strongest
  background diversity, with LPIPS-B 0.40 and L2-B 0.25, while keeping foreground RMSE much lower than Object-inv. +
  random BG, 0.05 versus 0.32.

\section{Visualizations}

  \begin{figure}[htbp!]
    \centering
    \includegraphics[width=\textwidth]{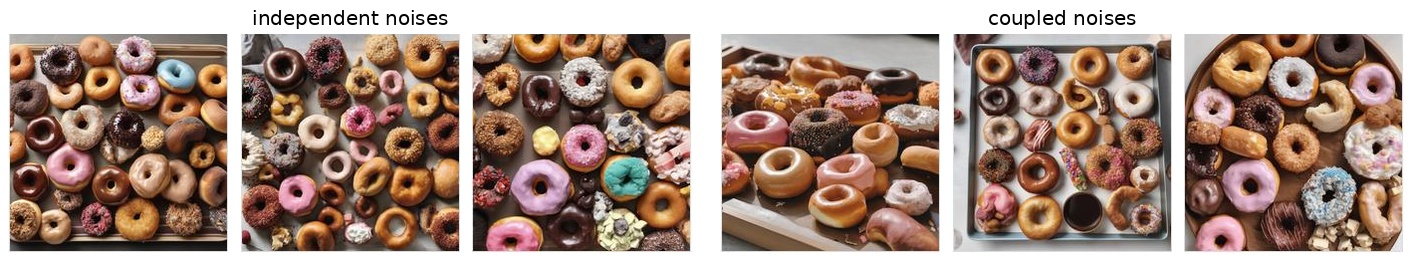}
    \caption{Independent versus coupled noises with the same prompt ``Assortment of doughnuts and other snack items
  on a serving tray.'' generated from SDXL.}
      \label{fig:diverse_img_sdxl}
  \end{figure}

 \begin{figure}
      \centering
      \includegraphics[width=0.95\linewidth]{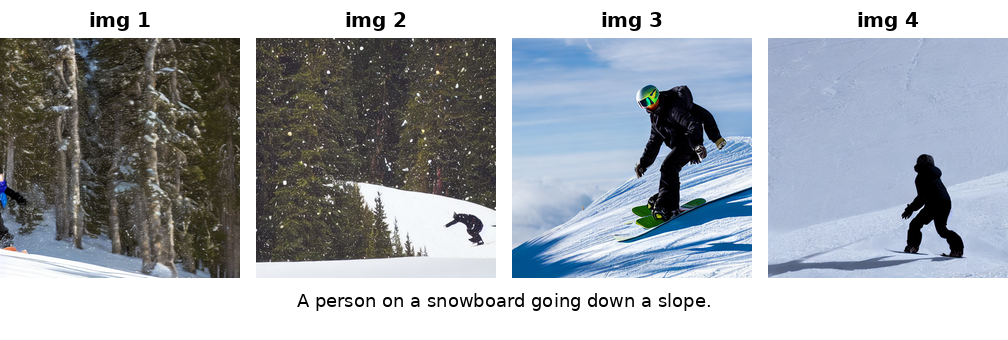}
      \vspace{0.5em}
      \includegraphics[width=0.95\linewidth]{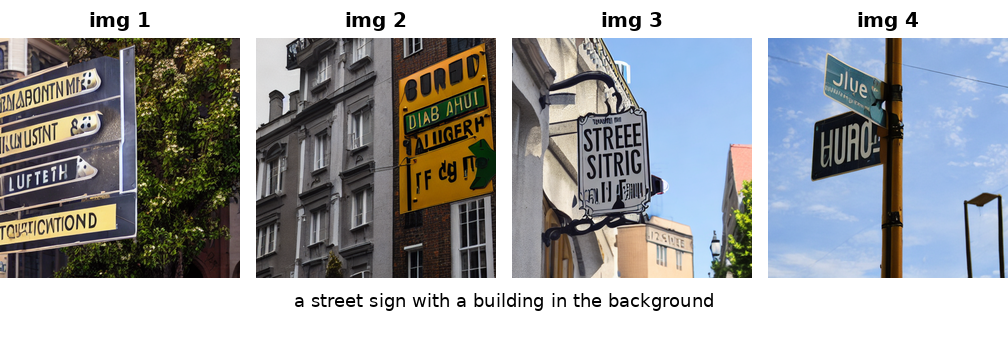}
      \caption{Examples of brightness-clustered generation with Stable Diffusion 1.5. The learned coupling produces
  four-image galleries with a clear bright (img 1, 2) and dark (img 3,4) split while preserving image quality and prompt alignment.}
      \label{fig:brightness_cluster_examples}
  \end{figure}

   \begin{figure}[htbp!]
      \centering
      \includegraphics[width=0.95\linewidth]{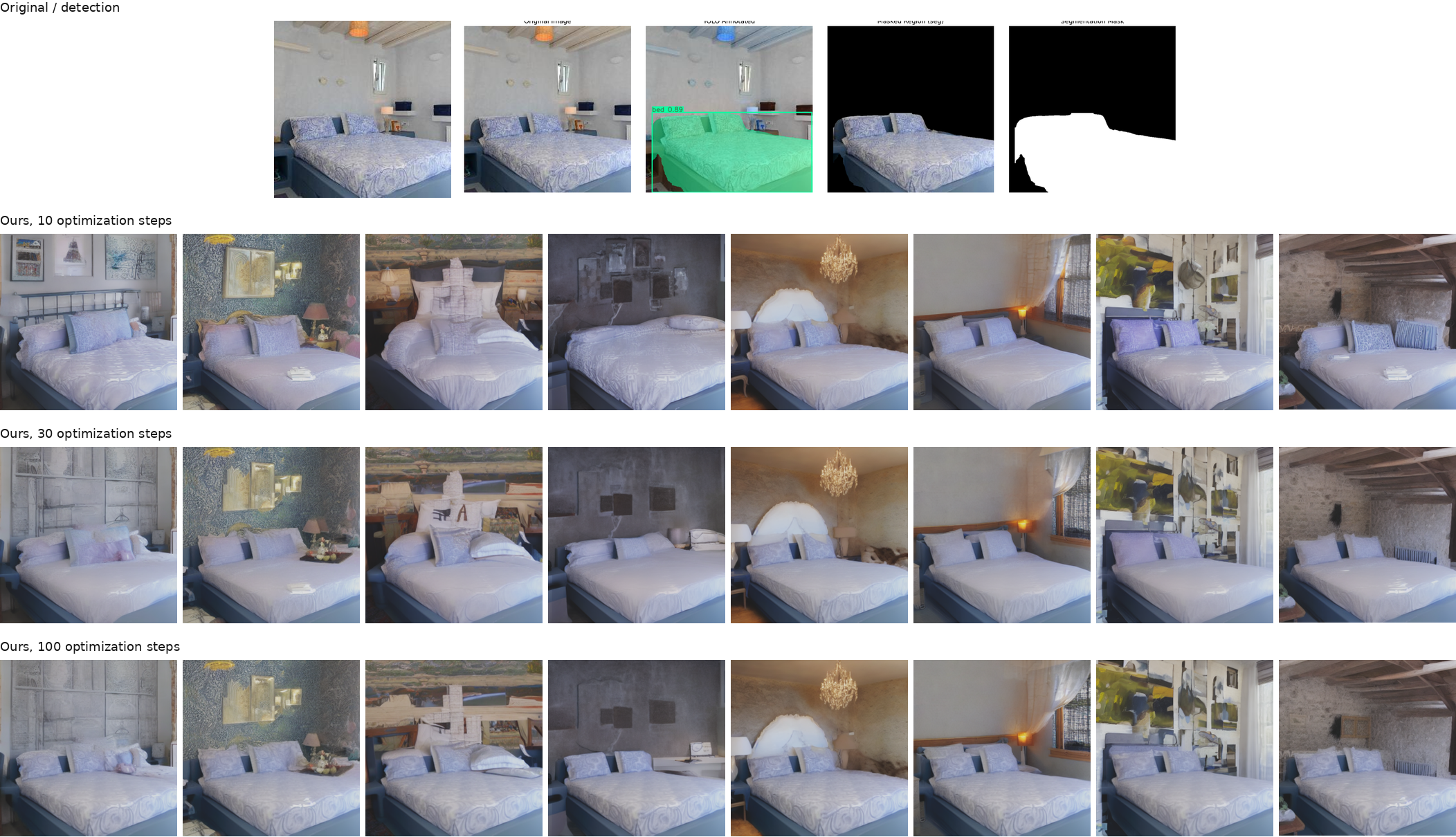}
      \caption{Fixed-object background generation on an LSUN bedroom image. The first row shows the input image, YOLOv8
  bed detection, masked foreground, and segmentation mask. The remaining rows show our method after 10, 30, and 100
  optimization steps. Across optimization budgets, the bed is preserved while the surrounding room layout, wall
  texture, lighting, and decor vary substantially. Increasing the number of optimization steps improves foreground
  fidelity while maintaining diverse and natural backgrounds.}
      \label{fig:bedroom-object}
  \end{figure}
\end{document}